\documentclass{article}

\PassOptionsToPackage{numbers,compress}{natbib}
\usepackage[preprint]{neurips_2026}
\usepackage{multirow}
\usepackage[utf8]{inputenc}
\usepackage[T1]{fontenc}
\usepackage{hyperref}
\usepackage{url}
\usepackage{booktabs}
\usepackage{graphicx}
\usepackage{amsmath,amssymb,amsfonts,amsthm,mathtools,bm}
\usepackage{nicefrac}
\usepackage{siunitx}
\usepackage{microtype}
\usepackage{xcolor}
\usepackage{enumitem}
\usepackage[table]{xcolor}
\usepackage{pifont}
\newcommand{\cmark}{\ding{51}}
\newcommand{\xmark}{\ding{55}}

\newtheorem{lemma}{lemma}

\title{Identify Then Project: Contrastive Learning of Latent Dynamics from Partial Observations with Port-Hamiltonian Structure}

\author{
  Peilun Li\\
  Department of Computer Science\\
  Vanderbilt University\\
  Nashville, TN 37235 \\
  \texttt{peilun.li@vanderbilt.edu} \\
  \And
  Kaiyuan Tan\\
  Department of Computer Science\\
  Vanderbilt University\\
  Nashville, TN 37235 \\
  \texttt{kaiyuan.tan@vanderbilt.edu} \\
  \AND
  Daniel Moyer\\
  Department of Computer Science\\
  Vanderbilt University\\
  Nashville, TN 37235 \\
  \texttt{daniel.moyer@vanderbilt.edu} \\
  \And
  Thomas Beckers\\
  Department of Computer Science\\
  Vanderbilt University\\
  Nashville, TN 37235 \\
  \texttt{thomas.beckers@vanderbilt.edu} \\}

\usepackage{graphicx,booktabs,multirow,subcaption}
\usepackage{amsmath,amssymb,amsthm}
\usepackage{thmtools}
\usepackage{thm-restate}
\usepackage{bm}
\declaretheorem[name=Theorem]{theorem}
\declaretheorem[name=Proposition]{proposition}

\begin{document}
\maketitle

\begin{abstract}
Identifying latent state representations and dynamics is essential when direct modeling in observation space is infeasible, particularly under partial and high-dimensional observations. In such settings, representation learning and physics-aware modeling are inherently coupled. We study this problem for latent port-Hamiltonian systems, a structured class encompassing both conservative and dissipative dynamics. We propose a two-stage identify-then-project framework. First, a contrastive teacher learns continuous-time latent dynamics from partial observations. Then, a student projects the identified teacher representation and dynamics onto a port-Hamiltonian submanifold via a learned affine chart, yielding a physically consistent realization. As a conceptual counterfactual, we also consider a single-stage variant that jointly learns latent identification and port-Hamiltonian structure, but find it to be less reliable, motivating the proposed two-stage teacher–student framework. We show theoretically that affine projection is the natural bridge between the affine gauge of contrastive latent identification and the port-Hamiltonian systems. Empirically, we demonstrate that the proposed two-stage approach preserves the teacher’s dynamics while enforcing physical structure, and performs more reliably than the single-stage alternative, particularly in dissipative regimes and high-dimensional visual settings.
\end{abstract}


\section{Introduction}
\label{sec:introduction why}
Data-driven system identification aims to learn dynamical models directly from observed trajectories, and has become an important alternative to first-principle approaches when the exact governing equations and parameters are unknown. Recent ML-based approaches have made this paradigm increasingly effective in scientific computing and control systems, especially in regimes where physics-based modeling is expensive or incomplete \citep{raissi2019pinn,karniadakis2021piml,rubanova2019latentode,rettberg2025data,drgovna2025safe}. Yet purely data-driven models often struggle to extrapolate, generalize outside data regime, or remain physically plausible over long horizons. This motivates physics-informed ML (PIML) that embeds structure such as conservation laws or symmetries into the learning process \citep{raissi2019pinn,karniadakis2021piml,van2014port,desai2021phnn, drgovna2025safe}. By restricting the hypothesis class to dynamics that satisfy known physical properties, PIML can improve sample efficiency, long-horizon stability, and generalization, while increasing interpretability \citep{raissi2019pinn,karniadakis2021piml,van2014port,desai2021phnn,drgovna2025safe}. 

 However, real-world physical systems are typically observed through incomplete, noisy, or high-dimensional measurements. This is a common problem in robotics, system identification, and scientific modeling, where one may observe images or positions without velocities rather than a state representation on which physical structure is naturally expressed \citep{toth2020hgn, ma2022gdoom, bhardwaj2026phast} This creates a core tension: the problem is no longer only \emph{how} to learn physics-informed dynamics, but also \emph{what} latent state representations on which such dynamics can be defined.

Contrastive learning (CL) is particularly well suited in this setting as it learns representations that capture predictive state information through InfoNCE loss~\citep{oord2018cpc, laiz2025dyncl, chen2020simpleframeworkcontrastivelearning}. Recent CL results show that latent dynamics can be recovered up to an affine coordinate transformation from nonlinear observation models \citep{laiz2025dyncl}. On the other hand, autoencoders are also a natural solution to learn latent dynamics under partial or high-dimensional observations and have been used successfully in image-based dynamical modeling \citep{krishnan2017dvbf,rubanova2019latentode,toth2020hgn}. However, an autoencoder is trained to preserve information useful for reproducing observations, and reconstruction quality does not by itself encourage a physically meaningful latent space. In contrast, a contrastive objective directly favors predictive latent state identification making it a better fit for the physics-informed latent modeling problem considered here.
 
In this paper, we consider the setting where the governing equations are unknown and observations are partial and/or high-dimensional, so that the underlying state is not directly available in a physically meaningful representation. The goal is to identify underlying dynamics that are accurate, generalizable, and consistent with physical principles. We adopt the port-Hamiltonian (pH) formalism, an universal energy-based modeling framework, to enforce physical structure without requiring detailed prior knowledge and combine it with a CL teacher–student architecture to learn latent dynamics.

\textbf{Related work and gap:}
Three neighboring lines of work are most relevant. First, latent Neural ODE and methods like Hamiltonian Generative Network (HGN) learn continuous-time dynamics from high-dimensional data \citep{chen2019neuralordinarydifferentialequations,rubanova2019latentode,toth2020hgn}. However, these methods do not impose port-Hamiltonian prior, and HGN in particular assumes dissipation-free dynamics, which is impractical to model most real-world systems. Second, contrastive predictive coding and dynamics CL methods show that latent dynamical representations can be identified from nonlinear observation models \citep{oord2018cpc,laiz2025dyncl, chen2020simpleframeworkcontrastivelearning}. In particular, Dynamics Contrastive Learning (DCL) provides strong affine-identifiability, but it does not impose any physical prior or continuous modeling in its latent space \citep{laiz2025dyncl}. Third, port-Hamiltonian Neural Network (pHNN) and latent pH methods exploit port-Hamiltonian structure for system identification \citep{desai2021phnn,rettberg2025data,bhardwaj2026phast}, but typically assume access to physically meaningful coordinates, state trajectories, or position-only measurements rather than raw partial or visual observations. Hence, there is a clear gap in the literature, which motivates our work that addresses physics-informed learning for system identification where not only governing equations are unknown, but also observations are partial, high-dimensional, and do not explicitly contain the entire state. 

\textbf{Contribution:} We propose to treat this problem as \emph{identification followed by projection}, whose schematic is visualized in Figure \ref{fig:method_schematic}.
\begin{figure*}[t]
    \centering
    \includegraphics[width=0.98\textwidth]{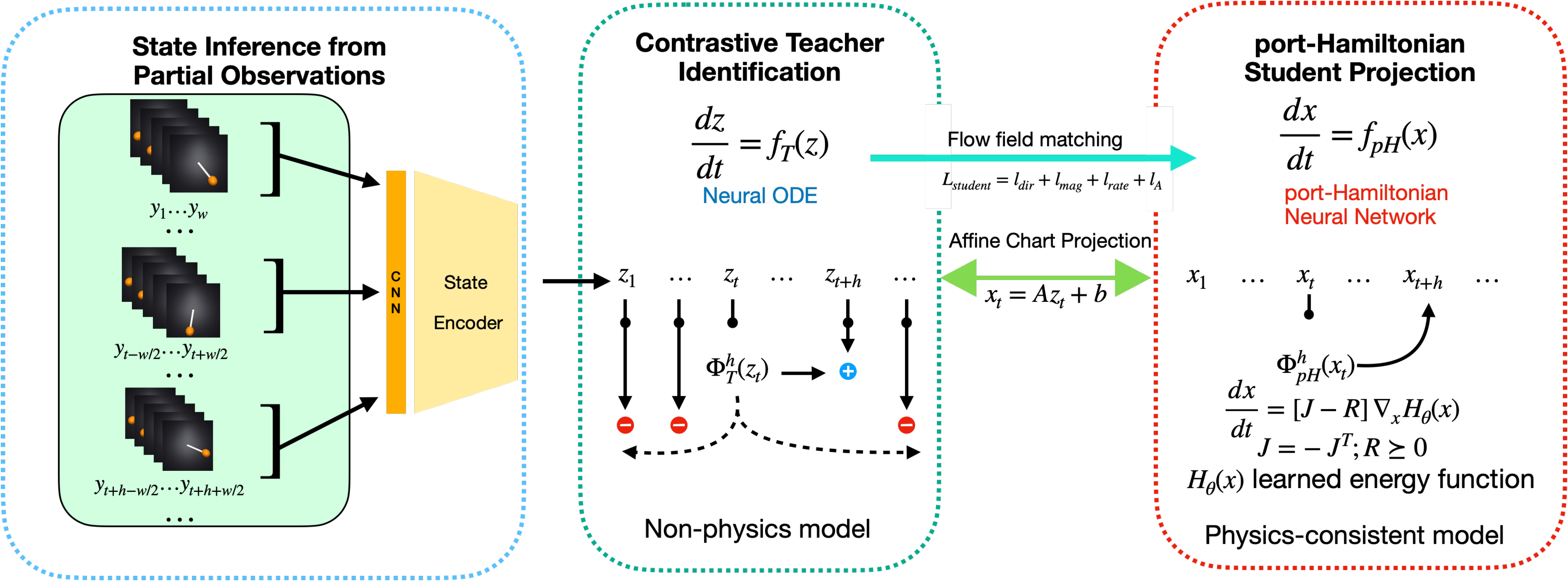}
    \caption{
    From partial observations, a windowed state encoder infers a latent teacher state \(z_t\).
    Then, a contrastive Neural ODE teacher identifies an unconstrained latent flow
    \(\dot z = f_T(z)\), which is projected by
    a port-Hamiltonian student onto a structured
    submanifold through a learned affine chart.
    This separates latent state identification from physics-consistent realization:
    the teacher is optimized for identifying predictive latent dynamics, while the student
    ensures a physical-consistent realization.
    }
    \label{fig:method_schematic}
\end{figure*}

We introduce CIPHER(\textbf{C}ontrastive \textbf{I}dentification with \textbf{PH} \textbf{E}mbedded \textbf{R}ealization) to decipher physics-consistent latent dynamics from partial observations. In detail, our contributions are threefold:\\
\noindent \textbf{(1) Method.} We propose a two-stage identify-then-project framework, where a contrastive teacher identifies a latent flow via Neural ODE and a port-Hamiltonian student projects it onto a structured submanifold through a learned affine chart, resulting in physically consistent latent dynamics.\\
\noindent \textbf{(2) Theory.} We show that affine projection is the natural connection between the affine gauge of contrastive latent identification and the affine covariance of port-Hamiltonian systems.\\
\noindent \textbf{(3) Comparative design result.} Our identify-then-project model outperforms  existing state-of-the-art methods, while maintain physical correctness by construction. We further introduce a direct single-stage \textsc{CL+pHNN} route as the main conceptual counterfactual and show empirically that the two-stage route preserves most of the teacher's identified dynamics while performing more reliably on dissipative and visual tasks.

\section{Background}
\label{sec:background}

\subsection{Problem setup and notation:}
\label{subsec:problem_setup}
Let \(s_t \in \mathbb{R}^{d_s}\) denote the physical state at some time instance $t$ of a continuous-time dynamical system 
\begin{equation}
\dot{s}_t = f_\star(s_t),\quad y_t = g_*(s_t),
\end{equation}
where \(f_\star\colon \mathbb{R}^{d_s}\to \mathbb{R}^{d_s}\) defines the flow and \(g_*\colon \mathbb{R}^{d_s}\to \mathbb{R}^{d_g}\) the possibly nonlinear observation map of the output \(y_t\in\mathbb{R}^{d_g}\).
In our setting, \(y_t\) is either a position vector (angle for the pendulum, position for robot) or a high-dimensional visual observation (image of the pendulum). The goal is to infer a latent state and a physical-consistent continuous-time latent dynamics model from a dataset of  \(N\) observation sequences \(\{y_t\}_{t=1}^N\) without knowledge of $f_\star$ and $g$. We distinguish three coordinate systems throughout the paper. The hidden physical state is denoted by \(s_t\). The \emph{teacher} latent state is denoted by \(z_t \in \mathbb{R}^{d_z}\), and evolves under a Neural ODE model. The \emph{student} state is denoted by \(x_t \in \mathbb{R}^{d}\), and is constrained to follow a pH dynamics model. 

\subsection{Contrastive identification for dynamics}
\label{subsec:contrastive_background}
Contrastive Predictive Coding \citep{oord2018cpc} is a learning framework that produces context or embedding variables that are functions of current states and maximally informative about future states. This is estimated by contrasting the true future against negative samples. This shares an intuitive connection with dynamics learning. From an observation window \(y_{t-w:t+w}\), CPC leverages an encoder \(E_\psi\) to infer a latent state
\(
z_t = E_\psi(y_{t-w:t+w}),
\) and a latent dynamics model to predict a future latent state \(\hat z_{t+h}\), Given a similarity score \(a(\cdot,\cdot)\) between predicted and encoded future states, the InfoNCE loss at horizon \(h\) is
\begin{equation}
\label{eq:infonce}
\mathcal{L}_{\mathrm{NCE}}^{(h)}
=
-\log
\frac{
\exp\!\big(a(\hat z_{t+h}, z_{t+h})/\tau\big)
}{
\exp\!\big(a(\hat z_{t+h}, z_{t+h})/\tau\big)
+
\sum_{j=1}^{K}
\exp\!\big(a(\hat z_{t+h}, z_j^-)/\tau\big)
}
\end{equation}
where \(\tau > 0\) is the temperature, \(\{z_j^-\}_{j=1}^K\) are negative samples, and \(a(u, v)\) is any similarity function.
Recent work has shown that this contrastive identification problem admits nontrivial identifiability guarantees. In particular, DCL shows that under suitable assumptions, contrastive learning can recover latent state and dynamics up to an \emph{affine} coordinate transformation \citep{laiz2025dyncl}. We adopt that affine-gauge viewpoint and refer the reader to \citet{laiz2025dyncl} for more details.

\subsection{Port-Hamiltonian systems}
\label{subsec:phs_background}
Port-Hamiltonian systems have emerged as a powerful framework for modeling complex physical systems across various disciplines~\citep{van2014port}.
A pH system with state \(x \in \mathbb{R}^{d_z}\) is defined by
\begin{equation}
\dot x = (J(x)-R(x))\nabla H(x),
\label{eq:bg_phs}
\end{equation}
where \(H:\mathbb{R}^{d_z}\to\mathbb{R}\) is the Hamiltonian (energy function), \(J(x)=-J(x)^\top\) is a skew-symmetric interconnection matrix, and \(R(x) \succeq 0\) is a positive semidefinite dissipation (PSD) matrix. The associated energy law is
\begin{equation}
\frac{d}{dt}H(x(t))
=
\nabla H(x)^\top (J(x)-R(x))\nabla H(x)
=
-\nabla H(x)^\top R(x) \nabla H(x)
\le 0.
\label{eq:energy_decay}
\end{equation}
Hamiltonian systems correspond to the conservative special case of \(R\) being the zero matrix, while dissipative port-Hamiltonian systems satisfy \(R \succcurlyeq 0\). Port-Hamiltonian realizations are not unique and can be defined up to an affine change of coordinates. This is consistent with recent discussions of gauge freedom in partially observed structured forecasting \citep{bhardwaj2026phast}.  However, a key fact for our projection view is that pH realizations are closed under affine coordinate changes.
\section{Proposed Method}
\label{sec:method}

Our method treats latent pH learning from partial observations as \emph{identification followed by projection}. A contrastive teacher first identifies a latent continuous-time flow in an unconstrained manifold. A pH student then projects this identified flow onto a structured submanifold through a learned affine chart. This decomposition is deliberate: the teacher is optimized for latent state identification, whereas the student learns a physics-consistent model in the projected manifold.

\subsection{State representation from partial observation}
Let partial observation at time \(t\) be defined as $o_t$, which can be either a numerical representation of the system $y_t$ (i.e., a vector of positions), or a raw image frame $I_t$ from which a visual embedding $C_\omega$ may be computed\footnote{
In principle, the visual encoder \(C_\omega\) can be any architecture.
Given our setting, we use a spatial CNN model pretrained on the video dataset \citep{pmlr-v48-cohenc16}.
}.
 The teacher state is inferred from a local observation window
\(
z_t = E_\psi(o_{t-w:t+w}),
\)
where \(E_\psi\) is a windowed Gated Recurrent Unit (GRU) \citep{chung2014empiricalevaluationgatedrecurrent}. Specifically, we add a linear layer after the GRU ouputs such that \(z_t=E_\psi(o_{t-w:t+w})
=
W_E\,\mathrm{GRU}_\psi(o_{t-w:t+w})+c_E\), where \(W_E,c_E\) are learnable. By construction, the observation window gives sufficient information for GRU  to infer derivative quantity from this observation window of states, and see Section \ref{sec:theory} for theoretical support of this design choice.

\subsection{Teacher: contrastive latent identification}
\label{subsec:teacher_method}

The teacher latent dynamics are modeled as a Neural ODE 
\(\dot z = f_T(z),\) with flow map \(\Phi_T^h\), predicting states \(h\) time-steps into future \citep{chen2019neuralordinarydifferentialequations}. From an inferred latent state \(z_t\), the teacher predicts a future latent state
\(\hat z_{t+h} = \Phi_T^h(z_t)\).
We train the teacher with the multi-horizon contrastive objective
\begin{equation}
\mathcal{L}_{T}
=
\mathop{\mathbb{E}}_{h \in \mathcal{H}}
[\mathcal{L}_{\mathrm{NCE}}^{(h)}(\hat z_{t+h}, z_{t+h}, \{z_j^{-}\})],
\label{eq:teacher_loss}
\end{equation}
using the negative-MSE
\(
a(u,v) = -\|u-v\|_2^2
\), and infoNCE loss as in \eqref{eq:infonce}.

For positive and negative set \(\{z_j^-\}\), \citet{shamba2025contrasttimelearningtime}
treats temporal neighbors to \(\hat{z}_{t+h}\) as positive pairs, but this is not accurate for dynamics since it treats multiple states as predictive targets. We instead formulate a temporal exclusion around \(\hat{z}_{t+h}\), not sampling states within such exclusion.
The resulting teacher is the \emph{identification model}: recovering a predictive latent flow from partial observations, without any structural physical prior.

\subsection{Student: projection onto a port-Hamiltonian submanifold}
\label{subsec:student_projection}

We now seek a structured projection of the teacher dynamics. We first introduce a learned affine chart
\begin{equation}
x_t = A z_t + b,
\qquad
A = \exp(B),
\label{eq:affine_chart}
\end{equation}
where \(B \in \mathbb{R}^{d_z \times d_z}\) and \(b \in \mathbb{R}^{d_z}\) are learned parameters, and the matrix exponential ensures that \(A\) is invertible. Determinants of \(A\) are strictly positive, which implies dynamically orientation-preserving. Then the question arises of when the teacher's latent space recovers reversed dynamics from data. We address this concern by referring to the linear transformation parameter \(W_E, c_E\) after GRU. Since both are unconstrained, any reflection or coordinate inversion can be learned in \(W_E\), while \(A\) remains orientation-preserving. The student latent dynamics are constrained to the port-Hamiltonian form
\begin{equation}
\dot x = f_{pH}(x)
=
(J-R)\nabla_x H_\theta(x),
\label{eq:student_phs}
\end{equation}
where \(J, R, H\) satisfies properties in equation \ref{eq:energy_decay} , and Hamiltonian \(H_\theta\) is learned by a neural network. We define
\(
R = L^\top L
\)
for a learnable matrix \(L\), so PSD is guaranteed by construction \citep{desai2021phnn}. Conceptually, student finds a pH submanifold through learned affine projection, then imitates teacher's dynamics. 


\subsection{Structured flow matching}
\label{subsec:flow_matching}

Let \(x_t = A z_t + b\) be the charted teacher state and define the teacher tangent transported to student coordinates as
\(
v_t = A f_T(z_t)
\), and student vector field at \(x_t\) is
\(
u_t = f_{pH}(x_t).
\) The student is trained to match the teacher flow through a structured flow-matching objective.

\noindent \textbf{Direction and Magnitude matching:}
To preserve local flow, we match the direction of the student field to the charted teacher tangent using cosine dissimilarity, and magnitude in log scale:
\begin{equation}
\mathcal{L}_{\mathrm{dir}}
=
\mathbb{E}_t
\left[
1 -
\frac{
\langle u_t, v_t \rangle
}{
\|u_t\|_2 \, \|v_t\|_2 + \varepsilon
}
\right], \qquad
\mathcal{L}_{\mathrm{mag}}
=
\mathbb{E}_t
\left[
\left|
\log(\|u_t\|_2 + \varepsilon)
-
\log(\|v_t\|_2 + \varepsilon)
\right|
\right]
\label{eq:Ldir}
\end{equation}
The difference in log can be translated to log of quotient of two magnitude, and \(\epsilon\) for numerical stability. Hence, \(\mathcal{L}_{dir}\) enforces a magnitude ratio close to \(1\), while allowing tolerance to admit physics-informed dynamics through affine projection.

\textbf{Energy-rate consistency: }A port-Hamiltonian vector field satisfies energy constraint in (\ref{eq:energy_decay}).
To encourage the charted teacher tangent \(v_t\) to be compatible with this law, we match the directional derivative of \(H_\theta\) along \(v_t\) to the dissipation predicted by the student:
\begin{equation}
\mathcal{L}_{\mathrm{rate}}
=
\mathbb{E}_t
\left[
\left|
\nabla H_\theta(x_t)^\top v_t
+
\nabla H_\theta(x_t)^\top R \nabla H_\theta(x_t)
\right|
\right].
\label{eq:Lrate}
\end{equation}
This term is not redundant with the pH parameterization. For the student vector
field \(u_t\), the pH energy law holds identically by
skew-symmetry of \(J\). However, \(\mathcal L_{\mathrm{rate}}\) is evaluated on
the transported teacher tangent \(v_t=A f_T(z_t)\), not on \(u_t\). Therefore it
vanishes automatically only in the exact projection limit \(v_t=u_t\). During
finite-capacity soft flow matching, it acts as a physics-informed regularizer
that encourages the student to explain teacher tangent through its learned
Hamiltonian and dissipation law.

\textbf{Chart regularization:} Since \(A=\exp(B)\), the identity chart corresponds to \(B=0\). We regularize the chart to avoid degenerate or unnecessarily distorted transformations by
\begin{equation}
\mathcal{L}_A
=
\|B\|_F^2 + \|b\|_2^2.
\label{eq:LA}
\end{equation}
In this way, the full student objective is given by
\begin{equation}
\mathcal{L}_{S}
=
\lambda_{dir}\mathcal{L}_{\mathrm{dir}}
+
\lambda_{\mathrm{mag}}\mathcal{L}_{\mathrm{mag}}
+
\lambda_{\mathrm{rate}}\mathcal{L}_{\mathrm{rate}}
+
\lambda_A \mathcal{L}_A.
\label{eq:student_loss}
\end{equation}
This objective has a clear interpretation.  Instead of penalizing student model through physics loss, we project the teacher coordinate onto the port-Hamiltonian manifold. The result is a latent predictive dynamics that is \emph{physically consistent}, not just a physics-informed model.

Conceptually, our method is a two-stage procedure. In practice, we implement student training against a detached exponential-moving-average (EMA) teacher \citep{tarvainen2018meanteachersbetterrole}. During student training, these teacher quantities are treated as stop-gradient targets. Thus, the student does not reshape the teacher manifold; it solves a structured projection problem onto the pH family.

At inference time, the teacher and student can both be rolled out autonomously. The teacher produces a flexible latent Neural ODE trajectory, whereas the student produces physics-consistent trajectory with an explicit learned Hamiltonian and dissipation law. The comparison between these two trajectories is central to our experiments: the teacher measures pure identification quality, while the student measures how much of that identified dynamics can be retained after projection onto a physics-informed submanifold.

\textbf{Direct single-stage \textsc{CL+pHNN}:} To test whether it is preferable to impose structure during contrastive identification or after it, we also instantiate a direct single-stage \textsc{CL+pHNN} variant. This model uses the same observation encoder, the same latent dimension, the same PHNN capacity, and the same contrastive objective as the teacher. Concretely, the direct model infers
\(
x_t = E_\phi(o_{t-w:t+w}),
\) and evolves it directly under the port-Hamiltonian vector field
\(
\dot x = g_\theta(x) = (J-R)\nabla H_\theta(x).
\)
Its future latent state is predicted by the corresponding pH flow \(\Phi_{\mathrm{pH}}^h\), and the model is trained with the same multi-horizon InfoNCE objective as (\ref{eq:teacher_loss}).

This direct route is not to be interpreted as a baseline. It is the natural single-stage alternative to our method. The difference is conceptual: direct \textsc{CL+pHNN} attempts to learn latent representation and port-Hamiltonian structure simultaneously, whereas our method decompose the ask by first identifies a latent dynamical manifold and then projects it onto the pH submanifold.

\section{Theoretical justification of affine projection}
\label{sec:theory}

We now justify the identify-then-project design in
Section~\ref{sec:method}. The argument builds on DCL's affine-identifiability
theorem \citep{laiz2025dyncl}, but DCL cannot be applied directly to our raw
partial observations. We first explain this gap and then show that the windowed
encoder restores the condition needed to invoke DCL in our setting.

\textbf{Non-injective partial observation.}
DCL assumes an injective observation map from latent state to observation
\citep{laiz2025dyncl}. This is violated by raw partial observations: a single
position measurement or image frame may reveal configuration but not velocity or
momentum, e.g., a pendulum image can show angle but not swing direction. Thus
DCL is not directly applicable to the instantaneous map
\(o_t=g(s_t)\).
This motivates our windowed encoder. Instead of treating \(o_t\) as the
observation, we use the finite window
\[
G_w(s_t)
=
\big(
g(F_\Delta^{-w}(s_t)),\ldots,g(s_t),\ldots,g(F_\Delta^{w}(s_t))
\big)
=
o_{t-w:t+w},
\qquad
F_\Delta=\Phi_\star^\Delta .
\]
We impose the window-observability condition that \(G_w\) is injective on the
data support \(\mathcal S_{\rm data}\). The GRU encoder is then interpreted as
learning an inverse of this effective observation map, up to DCL's affine gauge:
\(
z_t=E_\psi(o_{t-w:t+w})=E_\psi(G_w(s_t))\approx Ls_t+\beta
\).
The full DCL-compatible assumptions, including the deterministic-ODE
interpretation of DCL's Gaussian perturbation, are given in
Appendix~\ref{app:dcl_compatible_system}.

The next question is why the student should use an affine chart to connect the
teacher coordinate \(z\) to a port-Hamiltonian coordinate \(x\). The first
ingredient is that the pH family is closed under affine coordinate changes.

\begin{restatable}[Affine covariance of pH systems]{proposition}{affineprop}
\label{prop:affine_covariance}
Consider the port-Hamiltonian system \eqref{eq:bg_phs} and an invertible affine map \(y = A x + b\), where \(A \in \mathbb{R}^{d_z \times d_z}\) is nonsingular. Define
\begin{equation}
\tilde H(y) = H\!\big(A^{-1}(y-b)\big),\qquad
\tilde J(y) = A J(x) A^\top,\qquad
\tilde R(y) = A R(x) A^\top.
\end{equation}
Then the dynamics in \(y\)-coordinates are again port-Hamiltonian  \(
\dot y = (\tilde J(y) - \tilde R(y))\nabla_y \tilde H(y). 
\) 
\end{restatable}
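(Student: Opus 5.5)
The plan is a direct change-of-variables computation: the chain rule gives the transported vector field, and the pullback formula for the gradient gives $\nabla_y\tilde H$. Throughout, write $x = A^{-1}(y-b)$, so that $x$ is a smooth (indeed affine) function of $y$. The matrix-valued maps $\tilde J(y)$ and $\tilde R(y)$ are then well defined as functions of $y$ alone.

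\textbf{Transport the vector field.} Differentiating $y = Ax+b$ along a trajectory of \eqref{eq:bg_phs} gives
\[
\dot y = A\dot x = A\big(J(x)-R(x)\big)\nabla_x H(x).
\]

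\textbf{Relate the two gradients.} Since $\tilde H(y) = H(A^{-1}(y-b))$, the chain rule yields the row-vector identity $D_y\tilde H(y) = D_xH(x)\,A^{-1}$. Transposing,
\[
\nabla_y \tilde H(y) = A^{-\top}\nabla_x H(x), \qquad\text{equivalently}\qquad \nabla_x H(x) = A^\top \nabla_y\tilde H(y).
\]
This transpose is the one place where care is needed. It is exactly why the congruence $A(\cdot)A^\top$, rather than a similarity transform $A(\cdot)A^{-1}$, is the correct transformation for $J$ and $R$.

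\textbf{Substitute.} Putting the second identity into the transported field,
\[
\dot y = A\big(J(x)-R(x)\big)A^\top\nabla_y\tilde H(y) = \big(\tilde J(y)-\tilde R(y)\big)\nabla_y\tilde H(y),
\]
which is the claimed form.

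\textbf{Check the structural constraints.} It remains to show that $(\tilde J,\tilde R)$ again satisfy the pH conditions.
\begin{itemize}
\item Skew-symmetry is preserved: $\tilde J^\top = A J^\top A^\top = -AJA^\top = -\tilde J$.
\item Positive semidefiniteness is preserved: for any $v$, $v^\top \tilde R v = (A^\top v)^\top R\,(A^\top v)\ge 0$.
\item In the constant-coefficient case used by the student model \eqref{eq:student_phs}, $\tilde J$ and $\tilde R$ are again constant.
\end{itemize}
Invertibility of $A$ is needed only to define $\tilde H$ and the inverse chart. As a corollary, the energy law \eqref{eq:energy_decay} holds verbatim in $y$-coordinates, because $\tilde H(y(t)) = H(x(t))$.

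I expect no step to be a real obstacle. The only subtle point is the gradient transformation with $A^{-\top}$, which must be handled correctly for the congruence to close.
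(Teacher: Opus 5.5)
Your proposal is correct and follows essentially the same route as the paper's proof: the chain-rule identity \(\nabla_x H(x) = A^\top \nabla_y \tilde H(y)\), substitution into \(\dot y = A\dot x\), and the congruence argument for skew-symmetry of \(\tilde J\) and positive semidefiniteness of \(\tilde R\). Your remarks on the constant-coefficient case and on the energy law carrying over via \(\tilde H(y(t)) = H(x(t))\) are valid additions the paper does not spell out.
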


\begin{proof}
With invertible \(A\), we have \(x = A^{-1}(y-b)\). By the chain rule, we have
\(
\nabla_x H(x) = A^\top \nabla_y \tilde H(y)
\). Since \(\dot y = A \dot x\), substituting \eqref{eq:bg_phs} gives
\[
\dot y
=
A (J(x)-R(x))\nabla_x H(x)
=
A(J(x)-R(x))A^\top \nabla_y \tilde H(y)
=
(\tilde J(y)-\tilde R(y))\nabla_y \tilde H(y).
\]
Because \(J(x)=-J(x)^\top\), we have \(\tilde J(y)^\top = -\tilde J(y)\), and because \(R(x) \succeq 0\), we have \(\tilde R(y) \succeq 0\).
\end{proof}

Proposition~\ref{prop:affine_covariance} states that affine coordinate changes
preserve pH structure. DCL states that the teacher coordinate is identifiable only
up to an affine gauge. Combining these two facts gives the bridge used by the
student.

\begin{theorem}[Affine DCL--pH bridge]
\label{thm:affine_bridge}
Let \(d_s=d_z=d\). Let
\(s\in\mathcal S_{\rm data}\subset\mathbb R^d\) evolve under a well-posed ODE
with flow \(\Phi_\star^\tau\) and sampled flow \(F_\Delta=\Phi_\star^\Delta\).
Suppose the windowed system satisfies the DCL-compatible conditions in
Appendix~\ref{app:dcl_compatible_system}. Then the population teacher identifies
the state and sampled mean flow up to an affine gauge:
\(
E_\psi(G_w(s))=Ls+\beta,
\widehat F(z)
=
L F_\Delta(L^{-1}(z-\beta))+\beta .
\)
Suppose further that, on the same data support, the hidden dynamics admit a pH
realization in the student coordinate \(x=Ms+a\). Then the affine chart
\[
x=A_\star z+b_\star,
\qquad
A_\star=ML^{-1},
\qquad
b_\star=a-ML^{-1}\beta,
\]
transports the DCL-identified teacher flow to the pH flow. If the Neural ODE
teacher realizes this identified flow continuously in time, then
\(
A_\star f_T(z)
=
(J_\star-R_\star)\nabla H_\star(A_\star z+b_\star).
\)
Thus the student target \(v=A f_T(z)\) is exactly a pH vector field in the
ideal affine chart. 
\end{theorem}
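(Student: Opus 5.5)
The proof has three parts: an identification step that invokes DCL, an algebraic composition of affine maps, and a passage from sampled to continuous time that uses Proposition~\ref{prop:affine_covariance}.

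\textbf{Step 1: identification.} Under the DCL-compatible conditions of Appendix~\ref{app:dcl_compatible_system}, the effective observation map $G_w$ is injective on $\mathcal S_{\rm data}$. This lets us treat $o_{t-w:t+w}=G_w(s_t)$ as the injective observation of a latent system whose one-step dynamics are $F_\Delta$. We then apply the affine-identifiability theorem of \citet{laiz2025dyncl} verbatim and read off, at the population optimum of \eqref{eq:teacher_loss}, that $E_\psi\circ G_w(s)=Ls+\beta$ with $L$ invertible, and that the learned one-step predictor is conjugate to $F_\Delta$: $\widehat F(z)=LF_\Delta(L^{-1}(z-\beta))+\beta$. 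Nothing new is proved here; the only work is checking that the appendix assumptions match DCL's hypotheses. The one nontrivial point is that the deterministic ODE is the zero-noise, or mean-flow, reading of DCL's Gaussian perturbation, so that $\widehat F$ is identified as the mean step.

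\textbf{Step 2: the chart.} Composing $z=Ls+\beta$ with $x=Ms+a$ gives $x=ML^{-1}(z-\beta)+a=A_\star z+b_\star$, with $A_\star$ and $b_\star$ exactly as in the statement. For the sampled flow we check conjugacy directly: $A_\star\widehat F(z)+b_\star=MF_\Delta(s)+a$ whenever $z=Ls+\beta$. So the chart carries the teacher's one-step map to the student-coordinate map $x\mapsto MF_\Delta(M^{-1}(x-a))+a$. This map is the time-$\Delta$ flow of the hidden dynamics written in $x$-coordinates, and by hypothesis those dynamics are the pH flow of $(J_\star-R_\star)\nabla H_\star$.

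\textbf{Step 3: continuous time.} If the Neural ODE teacher realizes the identified flow in continuous time, its flow is $\Phi_T^\tau(z)=L\Phi_\star^\tau(L^{-1}(z-\beta))+\beta$ for all $\tau$, not only for $\tau=\Delta$. Differentiating at $\tau=0$ gives $f_T(z)=Lf_\star(s)$. Hence
\[
A_\star f_T(z)=Mf_\star(s)=\dot x=(J_\star-R_\star)\nabla H_\star(A_\star z+b_\star).
\]
The middle equality is the time derivative of $x=Ms+a$; the last equality is the assumed pH realization. Equivalently, Proposition~\ref{prop:affine_covariance} applied to $z\mapsto A_\star z+b_\star$ shows that the teacher field is itself pH with $\tilde J=A_\star^{-1}J_\star A_\star^{-\top}$ and $\tilde R=A_\star^{-1}R_\star A_\star^{-\top}$. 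The final sentence of the theorem then follows from $v=Af_T(z)$ at $A=A_\star$.

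\textbf{Main obstacle.} The hard step is Step 3's passage from the sampled map $F_\Delta$ to the vector field. Conjugacy at the single time $\Delta$ does not in general determine the generator uniquely, because different flows can share a time-$\Delta$ map. I would not try to derive it; I would rely on the stated hypothesis that the teacher realizes the identified flow continuously, namely $\Phi_T^\Delta=\widehat F$ together with conjugacy of the whole one-parameter family. Uniqueness of solutions of the well-posed ODE then lets us differentiate that family at $\tau=0$.
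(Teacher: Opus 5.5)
Your proposal is correct and follows essentially the same route as the paper. The steps match: invoke DCL on the windowed map $G_w$ to get the gauge $(L,\beta)$, compose $z=Ls+\beta$ with $x=Ms+a$ to obtain $A_\star,b_\star$, check one-step conjugacy, and differentiate the continuous-time realization at $\tau=0$. There are two small differences. First, you differentiate the teacher conjugacy first, obtaining $f_T(z)=Lf_\star(s)$, and then push it through $M$, whereas the paper first composes to $A_\star\Phi_T^\tau(z)+b_\star=\Phi_{pH}^\tau(A_\star z+b_\star)$ and differentiates that. Second, the paper's DCL device is a fixed-$\sigma>0$ Gaussian surrogate whose conditional mean is $F_\Delta$, not a zero-noise limit, since DCL needs a non-degenerate transition density.
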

The full statement and proof are given in
Appendix~\ref{thm:app_affine_bridge}. Theorem~\ref{thm:affine_bridge} shows DCL identifies
the hidden state only up to an affine gauge, while pH structure
is preserved under affine coordinate changes. The student chart
therefore supplies the missing transition from teacher coordinates to pH
coordinates in the ideal
population setting. However, with finite data and
model mismatch, the same bridge becomes the structured projection problem
optimized by the student. Hence, introduce the next proposition that affine charts are suitable approximation.

\begin{proposition}[Local adequacy of affine charts]
\label{prop:taylor_affine}
Let
\(
\psi:\mathcal U\subset\mathbb R^{d_z}\to\mathbb R^{d_z}
\)
be a \(C^2\) chart transition from teacher coordinates to structured
coordinates, and let \(z_0\in\mathcal U\). Denote its Jacobian by
\(
J_\psi(z_0)=\frac{\partial \psi}{\partial z}(z_0).
\)
Define the affine approximation
\(
\bar\psi(z)=\psi(z_0)+J_\psi(z_0)(z-z_0).
\)
Then there exist \(c>0\) and a neighborhood \(\mathcal N\) of \(z_0\) such that
\(
\|\psi(z)-\bar\psi(z)\|\le
c\|z-z_0\|^2,
\forall z\in\mathcal N.
\)
\end{proposition}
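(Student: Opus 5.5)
The plan is a standard Taylor-remainder argument. Since $\mathcal U$ is open (implicitly, as the domain of a $C^2$ chart), choose $r>0$ such that the closed ball $\bar B_r(z_0)=\{z:\|z-z_0\|\le r\}$ lies in $\mathcal U$, and set $\mathcal N=B_r(z_0)$. The ball is convex, so for every $z\in\mathcal N$ the whole segment $z_0+s(z-z_0)$, $s\in[0,1]$, stays inside $\mathcal U$. This convexity is what allows the fundamental theorem of calculus to be applied along the segment.

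Fix $z\in\mathcal N$ and put $\gamma(s)=\psi(z_0+s(z-z_0))$. Then $\gamma$ is $C^2$ with $\gamma'(s)=J_\psi(z_0+s(z-z_0))(z-z_0)$. This gives
\[
\psi(z)-\psi(z_0)=\int_0^1 J_\psi\big(z_0+s(z-z_0)\big)(z-z_0)\,ds ,
\]
so that
\[
\psi(z)-\bar\psi(z)=\int_0^1\Big(J_\psi\big(z_0+s(z-z_0)\big)-J_\psi(z_0)\Big)(z-z_0)\,ds .
\]
I work with this vector-valued integral form directly, rather than with a componentwise mean value theorem. The reason is that the mean value theorem fails for vector-valued maps, and the integral form sidesteps that problem.

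Next I control the Jacobian increment. Because $\psi\in C^2$, the map $z\mapsto J_\psi(z)$ is $C^1$. Its derivative, the second-derivative tensor $D^2\psi$, is continuous, so it is bounded on the compact set $\bar B_r(z_0)$. Let $K=\sup_{w\in\bar B_r(z_0)}\|D^2\psi(w)\|$, measured in the operator norm of bilinear maps. Applying the same segment argument to $J_\psi$ gives $\|J_\psi(z_0+s(z-z_0))-J_\psi(z_0)\|\le K s\|z-z_0\|$. Substituting this into the integral yields
\[
\|\psi(z)-\bar\psi(z)\|\le \int_0^1 K s\|z-z_0\|^2\,ds=\tfrac{K}{2}\|z-z_0\|^2 .
\]
So I can take $c=\max(K/2,1)>0$; the maximum only guarantees $c$ is strictly positive when $K=0$.

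There is no serious obstacle here. The only points needing care are the choice of a convex, precompact neighborhood whose closure lies inside $\mathcal U$, which makes the constant $K$ finite and keeps the segments in the domain, and the consistent use of norms. All norms on finite-dimensional spaces are equivalent, so the particular choices only change $c$.
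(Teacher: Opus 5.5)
Your proof is correct and follows essentially the same route as the paper. Both arguments take a convex neighborhood, write the integral form $\psi(z_0+h)-\psi(z_0)=\int_0^1 J_\psi(z_0+th)h\,dt$, and integrate a Lipschitz bound $Kt\|h\|$ on the Jacobian increment to obtain $c=K/2$. The only differences are that you derive the local Lipschitz constant explicitly from a bound on $D^2\psi$ over a compact ball and guard against $K=0$, whereas the paper simply asserts that $J_\psi$ is locally Lipschitz with some $K>0$.
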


\begin{proof}
Since \(\psi\in C^2\), its Jacobian \(J_\psi\) is locally Lipschitz. Hence
there exist a convex neighborhood \(\mathcal N\subset\mathcal U\) of \(z_0\) and a
constant \(K>0\) such that
\(
\|J_\psi(z)-J_\psi(z_0)\|
\le
K\|z-z_0\|,
\forall z\in\mathcal N .
\)
Let \(h=z-z_0\). By the fundamental theorem of calculus,
\[
\psi(z_0+h)-\psi(z_0)
=
\int_0^1 J_\psi(z_0+th)h\,dt .
\]
Therefore, we have
\[
\begin{aligned}
\psi(z_0+h)-\bar\psi(z_0+h)
&= \int_0^1 J_\psi(z_0+th)h\,dt -\int_0^1 J_\psi(z_0)h\,dt \\
&=
\int_0^1
\left[J_\psi(z_0+th)-J_\psi(z_0)\right]h\,dt .
\end{aligned}
\]
Taking norms gives
\(
\|\psi(z)-\bar\psi(z)\|
\le
\int_0^1 Kt\|h\|^2\,dt
=
\frac{K}{2}\|z-z_0\|^2 .
\)
Hence, the result follows with \(c=K/2\).
\end{proof}



Taken together, we justify the student design: the teacher
identifies a latent flow up to the affine gauge allowed by DCL, the pH family is
closed under the same class of coordinate changes, and affine charts are the
first-order local approximation to smooth teacher-to-student chart transitions.
\section{Numerical Evaluation}
\label{sec:experiments}
\label{sec:results}

We evaluate three questions:\\
\noindent\textbf{Q1. Identification:} Can the teacher identify latent dynamics from partial or visual observations?

\noindent \textbf{Q2. Projection:} Does the pH student preserve the teacher's identified dynamics after projection?

\noindent \textbf{Q3. Physics-consistency:} Does the projected dynamics realize the correct conservative or dissipative energy behavior?

These questions isolate the roles of the two stages: the teacher identifies a predictive latent flow, while the student projects that flow into a port-Hamiltonian realization. Since prior work does not directly study contrastive latent pH learning from partial or visual observations, we include direct \textsc{CL+pHNN} as the matched-capacity single-stage counterfactual rather than a standard baseline \citep{toth2020hgn,laiz2025dyncl,rettberg2025data,bhardwaj2026phast}.

\textbf{Systems and datasets.}
We evaluate on two systems with distinct geometric structure: a conservative pendulum and a damped Duffing oscillator. Each system is tested in two modalities: numeric partial observations and video observations. Numeric inputs are scalar observation sequences; video inputs are \(84\times84\) RGB frames passed through a frozen visual encoder \(C_\omega\), see Figure \ref{fig:video_frames} for details. Pendulum numeric and video datasets contain \(500\) trajectories of length \(150\). Duffing numeric contains \(500\) trajectories of length \(100\), and Duffing video contains \(500\) trajectories of length \(150\). All trajectories are sampled at \(\Delta t=0.05\) and split \(80/10/10\) into train, validation, and test sets. The Hamiltonian, parameters, and pH structure are not given to the models.

\textbf{Compared methods:}
Our full method trains a contrastive Neural ODE teacher and projects its identified flow onto a pHNN student through a learned affine chart. We compare against methods from two catagories.\\
\noindent \textbf{Ours variants:} Teacher (CL+NODE), One-stage \textsc{CL+pHNN}, Supervised \textsc{GRU+pHNN}.\\
\noindent \textbf{State of the art methods:} DCL with switching-linear latent dynamics \citep{laiz2025dyncl}, DCL* trained on full state datasets where applicable, and Mamba-3 as a high-capacity black-box sequence model \citep{gu2024mambalineartimesequencemodeling}.
Implementation details and codebase are given in Appendix~\ref{app:exp_details}.

\textbf{Q1-Identification evaluation:}
For each method \(m\), we fit a single affine map \((W_m,b_m)\) from its latent state to the ground-truth physical state on the training split. Starting from the same observation window, each method infers its latent state, predicts forward in its own coordinates, and is then mapped back to physical coordinates using this affine alignment. We report identification metric
\(
R^2_{\mathrm{id}}(h)
=
1-
\frac{
\sum_{(i,t_0)}
\|\hat s^{(m)}_{i,t_0+h}-s_{i,t_0+h}\|_2^2
}{
\sum_{(i,t_0)}
\|s_{i,t_0+h}-\bar s_h\|_2^2
},
\)
where \(\hat s^{(m)}_{i,t_0+h}\) is the aligned prediction, \(s_{i,t_0+h}\) is the true physical state, and \(\bar s_h\) is the mean ground-truth state at horizon \(h\). We summarize the first \(k=20\) steps by
\(
\mathrm{AUC}\text{-}R^2_{\mathrm{id},1:k}
=
\frac{1}{k}
\sum_{h=1}^{k}R^2_{\mathrm{id}}(h).
\)

\textbf{Q2-Projection evaluation:}
To test whether the student preserves the teacher's identified dynamics, we initialize the student from the same teacher-inferred anchor,
\(
x_{t_0}=Az_{t_0}+b.
\)
We roll out the teacher and student, map the student trajectory back to teacher coordinates by
\(
\tilde z^S_{t_0+h}=A^{-1}(x^S_{t_0+h}-b),
\)
and compare it to the teacher rollout \(z^T_{t_0+h}\). We report project metric
\(
R^2_{\mathrm{proj}}(h)
=
1-
\frac{
\sum_{(i,t_0)}
\|\tilde z^S_{i,t_0+h}-z^T_{i,t_0+h}\|_2^2
}{
\sum_{(i,t_0)}
\|z^T_{i,t_0+h}-\bar z^T_h\|_2^2
}.
\)
Consistent with the identification evaluation, we summarize the first \(k=20\) steps by
\(
\mathrm{AUC}\text{-}R^2_{\mathrm{proj},1:k}
=
\frac{1}{k}
\sum_{h=1}^{k}R^2_{\mathrm{proj}}(h).
\)

\textbf{Q3-Physical consistency evaluation:}
For the conservative pendulum, we evaluate whether the learned student preserves energy over long-horizon rollouts. For the damped Duffing oscillator, we evaluate whether the learned Hamiltonian decreases consistently with dissipative dynamics. We visualize the student's Hamiltonian landscape after affine alignment to physical coordinates and compare energy evolution along teacher, student, and direct \textsc{CL+pHNN} rollouts.

\newcommand{\unc}[2]{%
  $#1{\scriptscriptstyle\,\pm\,#2}$%
} 
\newcommand{\uncb}[2]{%
  \ensuremath{\mathbf{#1}{\scriptscriptstyle\,\color{gray}{\pm\,#2}}}%
}
\newcommand{\uncu}[2]{%
  \underline{\ensuremath{#1{\scriptscriptstyle\,\color{gray}{\pm\,#2}}}}%
}

\section{Results}
\label{sec:results}

Table~\ref{tab:identification_r2} reports observation-conditioned dynamical identification after affine alignment to physical state. We report the summary $\mathrm{AUC}\text{-}R^2_{1:20}$ and the rollout score at the edge of the evaluation horizon, $R^2@20$. On both metric, higher is better, and values below zero indicate performance worse than predicting the test-set mean. Two patterns are immediate. First, the contrastive teacher consistently outperforms the state of the art DCL baseline under partial observation across all four tasks, and remains stronger than the full-state DCL* on both numeric cases. Discussed in Section \ref{sec:theory}, injective observation condition no longer holds for DCL, breaking its core assumptions hence resulting in poor performance. This indicates that the proposed contrastive latent identification stage is effective where observations are partial and may not directly expose a state representation. Second, the advantage of contrastive latent identification becomes especially clear in the harder video settings where partial, high-dimensional observation is not directly related to state information. On both pendulum and Duffing video, the teacher and student \emph{substantially} outperform both DCL and the Mamba-3. Importantly, although the student model exhibits slightly lower predictive performance than the teacher, it is necessary to guarantee physical consistency. This property is critical, as the \emph{purely data-driven model can produce significantly erroneous predictions}, see Q3.

\begin{table}[t]
\centering
\caption{\textbf{Comparison against current SOTA.}
The contrastive teacher is consistently the strongest or closely competitive across all tasks, while the one-stage \textsc{CL+pHNN} remains competitive on the simpler conservative pendulum.
The proposed student stays close to the teacher across all four tasks, showing that projection onto the pH submanifold preserves most of the identified dynamics while remaining physically consistent. Best values (mean) in each row are bolded; second-best values are underlined. Negative scores are performance-wise insignificant, which we denote \(<0\) for readability.}
\label{tab:identification_r2}
\setlength{\tabcolsep}{2.8pt}
\renewcommand{\arraystretch}{1.08}
\footnotesize
\begin{tabular}{llccccccc}
\toprule
\multirow{2}{*}{\textbf{Experiments}} 
& \multirow{2}{*}{\shortstack{\textbf{Metric}\\ ($\mathbf{R^2_{\mathrm{id}}}$)}}
& \multicolumn{4}{c}{\textbf{External baselines}} 
& \multicolumn{3}{c}{\textbf{Our framework variants}} \\
\cmidrule(lr){3-6} \cmidrule(lr){7-9}
& 
& \textbf{DCL}
& \textbf{DCL*}
& \shortstack{\textbf{GRU}\\\textbf{pHNN}}
& \textbf{Mamba-3}
& \shortstack{\textbf{Teacher}\\\textbf{CL+NODE}}
& \shortstack{\textbf{CL+pHNN}\\\textbf{(One-stage)}}
& \shortstack{\textbf{Student}\\\textbf{(CIPHER)}} \\
\midrule

\multirow{2}{*}{\shortstack{Pendulum\\ Numerical}}
& AUC $\uparrow$
& <0
& \unc{0.87}{0.21}
& <0
& \uncb{0.99}{0.01}
& \unc{0.97}{0.03}
& \uncu{0.98}{0.02}
& \unc{0.96}{0.07} \\
& @20 $\uparrow$
& <0
& \unc{0.80}{0.34}
& <0
& \uncb{0.99}{0.01}
& \uncu{0.98}{0.03}
& \uncu{0.98}{0.02}
& \unc{0.92}{0.19} \\
\midrule

\multirow{2}{*}{\shortstack{Pendulum\\ Video}}
& AUC $\uparrow$
& <0
& N/A
& <0
& \unc{0.74}{0.45}
& \uncu{0.92}{0.09}
& \uncb{0.96}{0.05}
& \uncu{0.92}{0.10} \\
& @20 $\uparrow$
& <0
& N/A
& <0
& \unc{0.85}{0.23}
& \uncu{0.91}{0.12}
& \uncb{0.97}{0.04}
& \unc{0.90}{0.15} \\
\midrule

\multirow{2}{*}{\shortstack{Duffing\\ Numerical}}
& AUC $\uparrow$
& \unc{0.43}{0.46}
& \unc{0.84}{0.28}
& <0
& \uncb{0.99}{0.01}
& \uncu{0.95}{0.07}
& \unc{0.79}{0.22}
& \unc{0.91}{0.17} \\
& @20 $\uparrow$
& <0
& \unc{0.63}{0.71}
& <0
& \uncb{0.99}{0.03}
& \uncu{0.96}{0.07}
& \unc{0.56}{0.40}
& \unc{0.88}{0.24} \\
\midrule

\multirow{2}{*}{\shortstack{Duffing\\ Video}}
& AUC $\uparrow$
& \unc{0.26}{0.73}
& N/A
& <0
& \unc{0.54}{1.96}
& \uncb{0.92}{0.13}
& \unc{0.90}{0.16}
& \uncu{0.91}{0.13} \\
& @20 $\uparrow$
& \unc{0.15}{0.91}
& N/A
& <0
& \unc{0.36}{2.24}
& \uncb{0.93}{0.09}
& \uncb{0.93}{0.08}
& \uncu{0.92}{0.12} \\
\midrule

\multirow{2}{*}{Average}
& AUC $\uparrow$
& <0
& 0.85
& <0
& 0.82
& \textbf{0.94}
& 0.91
& \underline{0.93} \\
& @20 $\uparrow$
& <0
& 0.72
& <0
& 0.80
& \textbf{0.95}
& 0.86
& \underline{0.91} \\
\midrule

\rowcolor{gray!15}
\multicolumn{2}{l}{\textbf{Physics consistency}} 
& \xmark & \xmark & \cmark & \xmark & \xmark & \cmark & \cmark \\
\bottomrule
\end{tabular}
\end{table}

\textbf{Q1: Can the teacher identify useful latent dynamics from partial observations?}
\label{subsec:results_identification}

We focus on the Duffing oscillator because it is the more discriminative benchmark; the complete results of the pendulum identification are included in the appendix.
We show two points from Figure~\ref{fig:duffing_identification}. First, contrastive latent identification with a Neural ODE teacher is sufficient to recover a high-quality latent dynamical representation from partial observations, even in the high-dimensional video setting. Second, learning the port-Hamiltonian structure jointly with the latent representation is substantially more brittle on the harder Duffing benchmark. This motivates the two-stage route: if the latent geometry must first be discovered, it is advantageous to identify that geometry before imposing the port-Hamiltonian constraint.
\begin{figure}[!t]
    \centering
    \includegraphics[width=\textwidth]{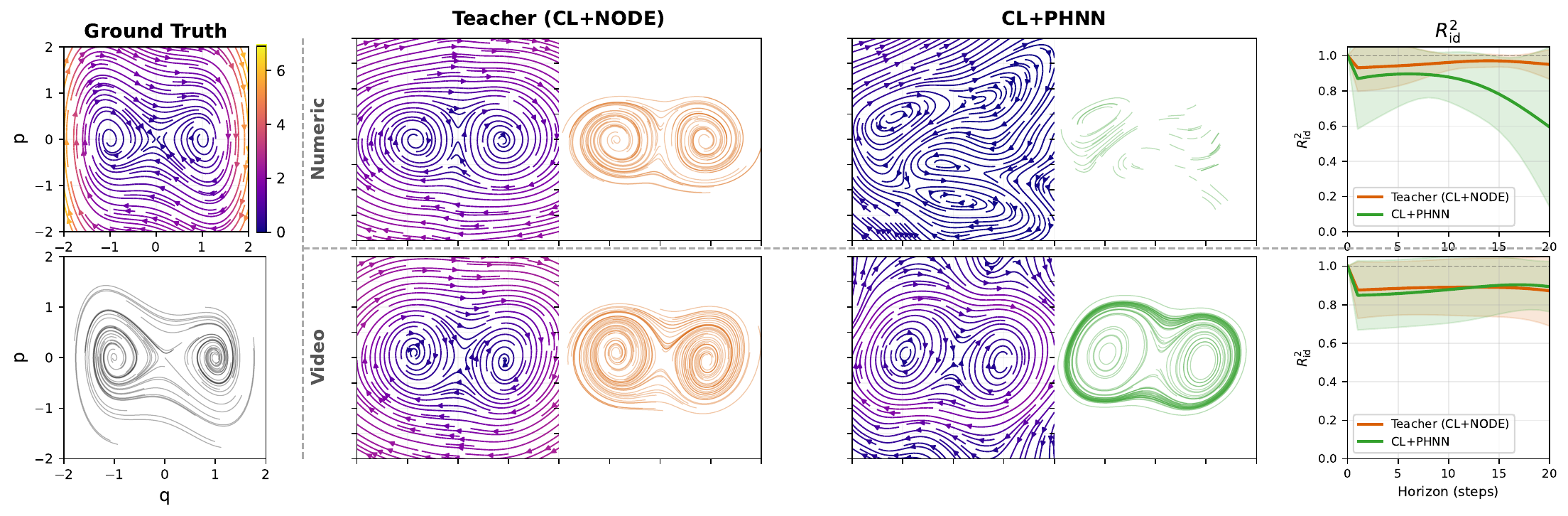}
    \caption{
\textbf{Q1: Identification from partial observations on Duffing.}
We compare the ground-truth phase portrait with the teacher and direct one-stage \textsc{CL+pHNN} in both numeric and video settings. The rightmost panel reports observation-conditioned \(\mathrm{AUC}\text{-}R^2_{\mathrm{id},1:20}\) and the rollout-\(R^2\) curve after a single train-split affine alignment to physical coordinates. On this harder dissipative system, the teacher consistently identifies a more faithful latent flow. Direct \textsc{CL+pHNN} fails to recover the correct double-well geometry in the numeric case and is less accurate than the teacher in video. As summarized in Table~\ref{tab:identification_r2}, the student remains close to the teacher, while additional trajectory visualizations for DCL, DCL*, and Mamba-3 are provided in Appendix Figures~\ref{fig:DCL_partial_all}, \ref{fig:DCL_full_all}, and \ref{fig:Mamba_all}.
}
    \label{fig:duffing_identification}
\end{figure}

\textbf{Q2: Does projection preserve the teacher's identified dynamics?}
\label{subsec:results_projection}

We are interested in observing if student is able to preserve teacher's learned dynamics, see Figure \ref{fig:projection_preservation} in Appendix. Both teacher and student are initialized from the same teacher-inferred latent state and rolled out autonomously. The student trajectory is mapped back through the inverse chart and evaluated in the teacher's latent space. This isolates the effect of projection itself.

Figure~\ref{fig:projection_preservation} shows that the student preserves the teacher's identified latent dynamics to a remarkable degree while endowing them with a structured pH realization. 

\begin{figure*}[t]
    \centering
    \includegraphics[width=\textwidth]{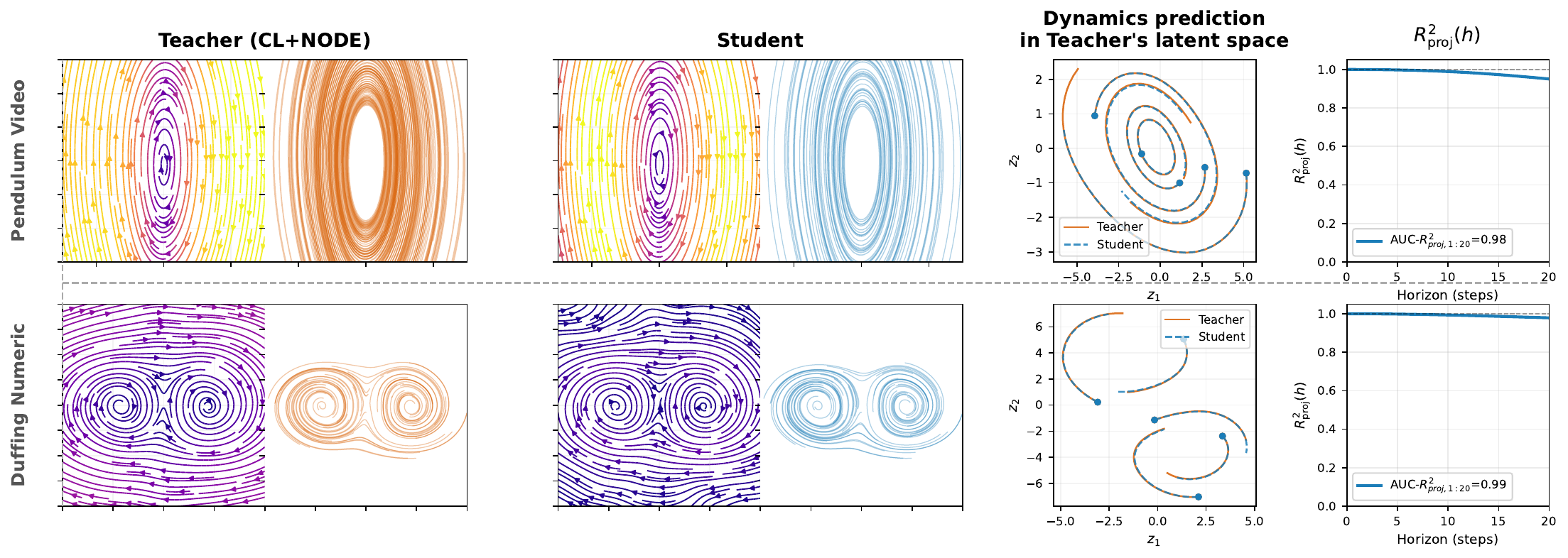}
    \caption{
    \textbf{Q2: Projection preserves the identified teacher dynamics.}
    Dynamics representations are shown for pendulum video and Duffing numeric, together with the projection score \(\mathrm{AUC}\text{-}R^2_{\mathrm{proj},1:20}\). The student stays close to the teacher's identified flow on both systems, showing that projection onto the port-Hamiltonian submanifold preserves most of the identified short-horizon dynamics.
    }
    \label{fig:projection_preservation}
\end{figure*}

The student is not intended to outperform the teacher on pure identification; rather, it projects the teacher's identified dynamics inside a physically structured family. Empirically, the projection retains almost all of the teacher's short-horizon predictive quality while introducing a meaningful energy law.

\textbf{Q3: Why is the projection physically needed?}
We see and argue for the need of having physically consistent model, see Figure~\ref{fig:energy_projection}. It compares the student's learned Hamiltonian landscape and long-horizon energy evolution on the conservative pendulum and dissipative Duffing. We shows the need of the student as the teachers dynamics violates physics principles, see Figure \ref{fig:energy_projection}.
\begin{figure*}[t]
    \centering
    \includegraphics[width=0.9\textwidth]{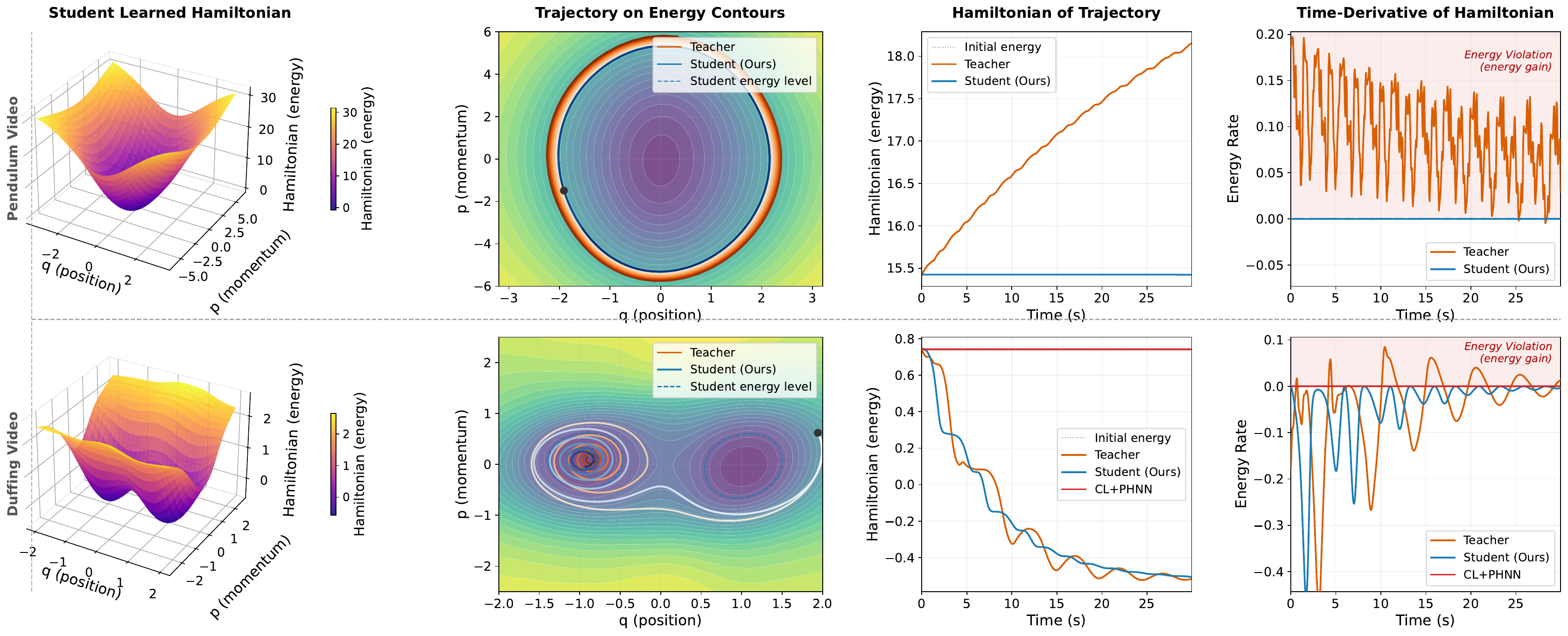}
    \caption{
    \textbf{Q3: Physics-informed projection through the learned Hamiltonian.}
    The student's learned Hamiltonian is visualized after affine alignment to physical coordinates. \textbf{Top row:} on the conservative pendulum, the teacher trajectory violates physical principles by gaining energy over time, whereas the student remains on a near-constant energy level set. \textbf{Bottom row:} on dissipative Duffing, the student learns a descending energy evolution, while the direct \textsc{CL+pHNN} route exhibits an almost flat learned energy profile, failing to capture the system's dissipative character. These results explain why projection is necessary: the teacher identifies the latent flow, while the student realizes it with the correct conservative or dissipative energy law.
    }
    \label{fig:energy_projection}
\end{figure*}

The pendulum and Duffing rows should be interpreted as two complementary failure modes of physically unconstrained latent dynamics. On the conservative pendulum, the physically correct behavior is energy conservation. Although the teacher identifies a plausible latent trajectory, its rollout drifts upward in energy under the same learned landscape, which is a \emph{physical violation}. The student corrects this by projecting the teacher onto a Hamiltonian realization whose energy remains constant/conservative. On Duffing, the physically correct behavior is dissipation. Here the student learns a descending energy profile consistent with the dissipative system, whereas both the teacher and the direct single-stage \textsc{CL+pHNN} route violate the physics. On one hand, \textsc{CL+pHNN} converges to a conservative dynamics and therefore misses the correct long-horizon physics. On the other hand, while the teacher model identified a dissipative dynamics, energy injection exists where the time derivative of Hamiltonian is positive, see Figure \ref{fig:energy_projection}. Hence, we observe \emph{physics violations} on both teacher and direct \textsc{CL+pHNN} models, whereas student model learn the dynamics and is physically faithful. This support our argument that the student is not only an imitated surrogate of the teacher: it is a physically structured projection of the teacher's latent flow.

\textbf{Ablation study:}
Finally, we study three internal design choices: whether state inference is causal or windowed/bidirectional, whether the projection stage is applied, and whether contrastive identification is used at all. 
We conclude from our results that projection preserves most of the teacher's identified dynamics: across all four tasks, the student retains most of the teacher's short-horizon \(\mathrm{AUC}\text{-}R^2_{\mathrm{id},1:20}\). Causal state inference is much weaker than the bidirectional/windowed encoder, especially under visual observations. Direct GRU\(+\)pHNN without contrastive learning collapses on all four tasks, showing that contrastive identification is essential in this problem setting.
\begin{table}[ht]
    \centering
    \small
    \caption{
    \textbf{Ablation study.}
    Scores are \(\mathrm{AUC}\text{-}R^2_{\mathrm{id},1:20}\), computed from observation-conditioned rollouts after train-split affine alignment to physical state.
    }
    \begin{tabular}{llcccc}
        \toprule
        \textbf{Variant} & \textbf{Model} & \textbf{Pend. Num.} & \textbf{Pend. Vid.} & \textbf{Duff. Num.} & \textbf{Duff. Vid.} \\
        \midrule
        \multirow{2}{*}{Full}
            & Teacher & $0.97\pm0.03$  & $0.92\pm0.09$  & $0.95\pm0.07$  & $0.92\pm0.13$  \\
            & Student & $0.96\pm0.07$  & $0.92\pm0.1$  & $0.91\pm0.17$  & $0.91\pm0.13$  \\
        \midrule
        \multirow{2}{*}{Causal encoder}
            & Teacher & $0.69\pm0.3$  & $0.69\pm0.36$ & $0.58\pm0.56$ & $0.07\pm0.88$ \\
            & Student & $0.69\pm0.30$ & $0.71\pm0.33$ & $0.58\pm0.56$ & $0.04\pm0.96$ \\
        \midrule
        Supervised (no CL) & GRU$+$pHNN & $<0$ & $<0$ & $<0$ & $<0$ \\
        \bottomrule
    \end{tabular}
    \label{tab:ablation}
\end{table}


\section{Discussion and Conclusion}
\label{sec:discussion_conclusion}

The experiments support the identify-then-project view of the problem. The contrastive teacher is the strongest pure identifier on the harder tasks, the student preserves most of the teacher's identified dynamics, and the energy analysis shows why the student is essential: the teacher alone does not guarantee physically valid long-horizon behavior, whereas the projected student realizes the appropriate conservative or dissipative energy law. Direct \textsc{CL+pHNN} can be competitive on simpler settings, but it is less reliable on harder dissipative dynamics because it must learn representation, chart, and port-Hamiltonian structure simultaneously.

A limitation of the framework is that the relevant latent dynamics admit a useful port-Hamiltonian realization. This is broad enough for many conservative and dissipative systems, but not universal. Another limitation is that training is primarily one-way: the student projects the teacher flow, but projection errors are not propogated back to improve the teacher. Future work will explore tighter teacher--student coupling, such as student generated hard negatives for CL.

Overall, the paper introduces the two-stage framework CIPHER for learning port-Hamiltonian latent dynamics from partial or visual observations. A contrastive teacher first identifies a predictive latent flow, and a pH student then projects that flow through a learned affine chart into a physically consistent vector-field class. Together, the results show that separating latent identification from physical realization is a promising route for structured system identification when the true state is hidden behind partial or high-dimensional observations.

\newpage
\bibliographystyle{plainnat}
\bibliography{main_ref}

\appendix
\newpage
\section{Appendix}
\subsection{Theory details and proofs}
\label{app:affine_bridge}

This part of appendix gives the formal statements behind Section~\ref{sec:theory}. The
main text states the affine DCL--pH bridge in a compact form; here we spell out
(i) the DCL-compatible assumptions for the windowed partially observed system,
(ii) the deterministic-ODE interpretation of the Gaussian perturbation used by
DCL, and (iii) the complete proofs of the affine covariance, affine bridge, and
local affine-approximation results.

\subsubsection{Non-injective Partial Observation}
\label{app:dcl_compatible_system}

Let \(s\in\mathcal S\subset\mathbb R^d\) denote the hidden physical state. The
physical systems considered in this paper are deterministic continuous-time
ODEs with flow \(\Phi_\star^\tau\). For sampling interval \(\Delta>0\), define
the sampled deterministic mean flow
\begin{equation}
    F_\Delta(s)=\Phi_\star^\Delta(s).
\end{equation}
DCL is stated for a sampled stochastic transition with a non-degenerate
conditional density. We therefore introduce the following small-noise
regularization only for the identifiability argument:
\begin{equation}
    s_{k+1}^{(\sigma)}=F_\Delta(s_k)+\varepsilon_k^{(\sigma)},
    \qquad
    \varepsilon_k^{(\sigma)}\sim\mathcal N(0,\sigma^2\Sigma_0),
    \qquad
    \Sigma_0\succ0,
    \label{eq:app_small_noise_system}
\end{equation}
for any fixed \(\sigma>0\). The underlying pH system and all experiments use the
deterministic mean dynamics \(s_{k+1}=F_\Delta(s_k)\). The Gaussian perturbation
in~\eqref{eq:app_small_noise_system} is a theorem-level device that gives DCL a
non-degenerate transition density and the negative-squared contrastive form; it
is not an assumption that the physical system is intrinsically stochastic.

Because the raw observation \(o_k=g(s_k)\) can be non-injective, we apply DCL
to the finite observation window used by the encoder. This window is defined
using the deterministic mean flow \(F_\Delta=\Phi_\star^\Delta\):
\[
G_w(s)
=
\big(
g(F_\Delta^{-w}(s)),\ldots,g(s),\ldots,g(F_\Delta^{w}(s))
\big).
\]
Along a deterministic trajectory, this equals
\[
G_w(s_k)=(o_{k-w},\ldots,o_k,\ldots,o_{k+w}).
\]
In the DCL-compatible small-noise surrogate, the effective observation is
\(O_k=G_w(s_k)\). The Gaussian perturbation is used only in the transition
density; it is not used to generate the observation window. For scalar observations, \(G_w\) maps a
state to a window of scalar measurements; for video observations, \(o_k\) is the
feature returned by the frozen visual encoder. The assumptions used below are:
\begin{enumerate}
    \item[(D1)] \textbf{Bijective sampled mean dynamics.}
    The flow \(\Phi_\star^\tau\) exists on the data support for the time
    interval of interest, and \(F_\Delta=\Phi_\star^\Delta\) is bijective there.
    For a well-posed autonomous ODE, this is the usual invertibility of the flow
    map, with inverse \(\Phi_\star^{-\Delta}\), restricted to the support where
    both maps are defined.

    \item[(D2)] \textbf{Injective windowed observation.}
    The window map is injective on the data-support state manifold:
    \begin{equation}
        G_w(s)=G_w(s')\quad\Longrightarrow\quad s=s',
        \qquad s,s'\in\mathcal S_{\rm data}.
    \end{equation}
    This replaces the injective raw observation assumption in DCL.

    \item[(D3)] \textbf{Gaussian perturbation for identifiability.}
    For the purpose of invoking the DCL theorem, the sampled dynamics are given
    by~\eqref{eq:app_small_noise_system} with \(\sigma>0\). The conditional mean
    is still exactly the deterministic sampled flow \(F_\Delta(s)\). Thus D3 is
    an identifiability regularization, not a physical stochasticity assumption.

    \item[(D4)] \textbf{Population DCL model and optimum.}
    The ideal DCL model class contains an encoder \(E\), a sampled dynamics
    model \(\widehat F\), the correction term \(\alpha\) for the marginal density,
    and the negative-squared similarity \(\varphi(u,v)=-\|u-v\|_2^2\). The
    contrastive objective attains its population/global optimum in the
    infinite-data limit.
\end{enumerate}

The implemented teacher uses the same windowed encoder/dynamics structure and
negative-squared score, but finite data, finite-capacity optimization, omitted
explicit correction terms, and deterministic experiments are not claimed to
satisfy the stochastic DCL theorem exactly. The theorem below uses DCL as an
ideal affine-identifiability statement for the small-noise surrogate and then
applies the pH bridge to the identified deterministic conditional mean
\(F_\Delta\).

\subsubsection{DCL affine identifiability for the windowed system}
\label{app:dcl_affine_identifiability}

\begin{lemma}[DCL affine identifiability with a window map]
\label{lem:app_windowed_dcl}
Under (D1)--(D4), the population DCL solution identifies the hidden state and
sampled deterministic mean dynamics up to an affine gauge. That is, there exist
\(L\in\operatorname{GL}(d)\) and \(\beta\in\mathbb R^d\) such that, on the data
support,
\begin{equation}
    E(G_w(s))=Ls+\beta,
    \label{eq:app_dcl_state}
\end{equation}
and
\begin{equation}
    \widehat F(z)
    =
    L F_\Delta\!\left(L^{-1}(z-\beta)\right)+\beta .
    \label{eq:app_dcl_flow}
\end{equation}
\end{lemma}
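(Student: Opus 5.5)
The plan is to reduce the windowed, partially observed system to a system that satisfies the hypotheses of the DCL affine-identifiability theorem of \citet{laiz2025dyncl}, and then read off the conclusion. Concretely, I will treat \(s_k\) as the DCL latent. I will take as its transition the small-noise surrogate \eqref{eq:app_small_noise_system}, \(s_{k+1}=F_\Delta(s_k)+\varepsilon_k\), and as its observation the effective map \(O_k=G_w(s_k)\). DCL requires three things: a bijective mean transition, an injective observation map, Gaussian transition noise (which makes the log-ratio of the transition density a negative squared distance after whitening), and a population optimum of the contrastive objective with the \(\alpha\) correction. These are supplied respectively by (D1), (D2), (D3) and (D4). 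The only nonstandard point is that the observation is \(G_w\) rather than a raw \(g\). DCL never uses the structure of the observation map beyond injectivity, so I will check this explicitly and substitute \(G_w\) for it.

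The steps, in order, are as follows.

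\emph{Step 1.} Verify that \(G_w\) is a well-defined function of the current state alone. Since \(F_\Delta\) is bijective by (D1), each \(F_\Delta^{j}\) for \(j\in\{-w,\dots,w\}\) is defined on the support. Injectivity of \(G_w\) is (D2). Hence \(G_w\) is invertible on its image, and the composite \(h:=E\circ G_w\) is a map from states to latents.

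\emph{Step 2.} Invoke DCL at the population optimum. The contrastive loss is minimized when the score \(-\|\widehat F(E(O_k))-E(O_{k+1})\|^2+\alpha\) equals the true log density ratio \(-\tfrac{1}{2\sigma^2}\|\Sigma_0^{-1/2}(s_{k+1}-F_\Delta(s_k))\|^2\) plus terms depending on \(s_{k+1}\) only. Following DCL's argument, I will differentiate this identity in \(s_{k+1}\) and \(s_k\) separately to separate the variables. This yields that \(h\) is affine, \(h(s)=Ls+\beta\), with \(L\) invertible. Invertibility comes from the nondegeneracy of \(\Sigma_0\) together with the injectivity of \(h\), which is inherited from (D2) and the requirement that \(E\) distinguish distinct windows. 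This is \eqref{eq:app_dcl_state}. Any scaling by \(\Sigma_0^{-1/2}/\sigma\) is absorbed into \(L\).

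\emph{Step 3.} Conclude the flow statement. The same identity forces \(\widehat F(h(s))=L F_\Delta(s)+\beta\) on the support. Substituting \(s=L^{-1}(z-\beta)\) gives \eqref{eq:app_dcl_flow}.

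The main obstacle is Step 2. DCL's theorem is stated for an injective observation of the state at a single time, whereas here the observation window is generated by the deterministic flow while the transition is noisy. I have to make sure this mismatch does not break the density-ratio argument. I would first point out that DCL's proof only conditions on \((s_k,s_{k+1})\) through the observations \(O_k,O_{k+1}\). Because \(G_w\) is injective, conditioning on \(O_k\) is equivalent to conditioning on \(s_k\), irrespective of how the window was produced. Consequently the Bayes-optimal critic depends only on the Gaussian transition density, as in DCL. If the separation-of-variables step needs an open support, I would additionally assume that \(\mathcal S_{\rm data}\) has nonempty interior, so the derivatives in Step 2 are well defined.
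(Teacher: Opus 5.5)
Your plan matches the paper's proof: take the surrogate \(s_{k+1}=F_\Delta(s_k)+\varepsilon_k\) with effective observation \(O_k=G_w(s_k)\), match (D1)--(D4) to DCL's hypotheses, note that the Gaussian center is exactly \(F_\Delta\) (the noise never enters the window), and read off the affine gauge. The paper does this purely by citing DCL and never re-derives your Steps 2--3. If you do re-derive them, be careful on two points: the optimal InfoNCE score is fixed only up to an additive function of the anchor \(s_k\), and with a freely learned \(\alpha\) the mixed-derivative identity gives \(h(s)=Ls+\beta\) but only \(\widehat F(h(s))=\tfrac{1}{2\sigma^2}L^{-\top}\Sigma_0^{-1}F_\Delta(s)+\kappa\), which equals \(LF_\Delta(s)+\beta\) only if \(L^\top L=\Sigma_0^{-1}/(2\sigma^2)\) and \(\kappa=\beta\) --- a normalization the optimality condition does not force, since \(\alpha\) absorbs the quadratic in \(s_{k+1}\) --- so Step 3 must rest on DCL's theorem as stated (or on a constraint pinning \(\alpha\)), not on ``absorbing the scaling into \(L\)''.
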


\begin{proof}
Apply the DCL affine-identifiability theorem to the surrogate sampled system
\[
s_{k+1}=F_\Delta(s_k)+\varepsilon_k,
\qquad
O_k=G_w(s_k).
\]
Assumption (D1) gives bijective latent dynamics \(F_\Delta\), (D2) gives an
injective observation map \(G_w\), (D3) gives the iid Gaussian perturbation, and
(D4) gives the population DCL model class and optimum. The latent transition
function in the DCL model is the center of the Gaussian conditional density.
In the surrogate system this center is exactly the deterministic sampled ODE
flow \(F_\Delta\). Therefore the DCL theorem yields
\[
E(G_w(s))=Ls+\beta
\]
and
\[
\widehat F(z)
=
L F_\Delta(L^{-1}(z-\beta))+\beta .
\]
Thus DCL identifies the deterministic mean sampled flow \(F_\Delta\), not the
noise realization, up to an affine gauge.
\end{proof}

\subsubsection{Affine covariance of port-Hamiltonian systems}
\label{app:affine_covariance_full}

\affineprop*
\begin{proof}
Let \(x = A^{-1}(y-b)\). By the chain rule, we have
\(
\nabla_x H(x) = A^\top \nabla_y \tilde H(y)
\). Since \(\dot y = A \dot x\), substituting \eqref{eq:bg_phs} gives
\[
\dot y
=
A (J(x)-R(x))\nabla_x H(x)
=
A(J(x)-R(x))A^\top \nabla_y \tilde H(y)
=
(\tilde J(y)-\tilde R(y))\nabla_y \tilde H(y).
\]
Because \(J(x)=-J(x)^\top\), we have \(\tilde J(y)^\top = -\tilde J(y)\), and because \(R(x) \succeq 0\), we have \(\tilde R(y) \succeq 0\).
\end{proof}

\subsubsection{Full affine DCL--pH bridge}
\label{app:full_affine_bridge_subsection}

\begin{theorem}[Full affine DCL--port-Hamiltonian bridge]
\label{thm:app_affine_bridge}
Assume \(d_s=d_z=d\). Let the hidden state
\(s\in\mathcal S_{\rm data}\subset\mathbb R^d\) evolve according to a smooth
well-posed continuous-time ODE with flow \(\Phi_\star^\tau\), and let
\(F_\Delta=\Phi_\star^\Delta\). Suppose the windowed sampled system satisfies
(D1)--(D4), so that Lemma~\ref{lem:app_windowed_dcl} holds.

Assume further that, on the data support, the deterministic hidden dynamics
admit a pH realization after an affine coordinate change: there exist
\(M\in\operatorname{GL}(d)\), \(a\in\mathbb R^d\), and a pH triple
\((H_\star,J_\star,R_\star)\) such that \(x=Ms+a\) satisfies
\begin{equation}
    \dot x=(J_\star-R_\star)\nabla H_\star(x),
    \qquad
    J_\star=-J_\star^\top,
    \qquad
    R_\star\succeq0.
    \label{eq:app_ph_realization}
\end{equation}
Let \(\Phi_{pH}^\tau\) denote the flow of~\eqref{eq:app_ph_realization}. Define
\begin{equation}
    A_\star=ML^{-1},
    \qquad
    b_\star=a-ML^{-1}\beta,
    \label{eq:app_chart_def}
\end{equation}
where \(L,\beta\) are the affine gauge from Lemma~\ref{lem:app_windowed_dcl}.
Then \(x=A_\star z+b_\star\) is an exact teacher-to-student affine chart. In
sampled time,
\begin{equation}
    A_\star \widehat F^k(z)+b_\star
    =
    \Phi_{pH}^{k\Delta}(A_\star z+b_\star),
    \qquad k\in\mathbb N,
    \label{eq:app_sampled_bridge}
\end{equation}
whenever the iterates remain on the data support.

If the Neural ODE teacher is a continuous-time realization of the identified
mean flow, i.e.
\begin{equation}
    \Phi_T^\tau(Ls+\beta)=L\Phi_\star^\tau(s)+\beta
    \label{eq:app_ct_realization}
\end{equation}
for \(\tau\) in a neighborhood of zero, then for every identified teacher state
\(z=Ls+\beta\),
\begin{equation}
    A_\star f_T(z)
    =
    (J_\star-R_\star)\nabla H_\star(A_\star z+b_\star).
    \label{eq:app_vector_bridge}
\end{equation}
\end{theorem}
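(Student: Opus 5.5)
The plan is to compose the affine gauge from Lemma~\ref{lem:app_windowed_dcl} with the affine realization map \(x=Ms+a\). First we show that \(x=A_\star z+b_\star\) is an exact chart. For an identified teacher state \(z=Ls+\beta\) with \(s\in\mathcal S_{\rm data}\), we have \(L^{-1}(z-\beta)=s\), so
\[
A_\star z+b_\star=ML^{-1}z+a-ML^{-1}\beta=ML^{-1}(z-\beta)+a=Ms+a .
\]
Hence the chart sends \(E(G_w(s))\) to the student coordinate of the same physical state. Because \(A_\star\) is a product of invertible matrices, the chart is an invertible affine map. Its inverse is \(z=LM^{-1}(x-a)+\beta\).

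\textbf{Sampled bridge~\eqref{eq:app_sampled_bridge}.} We argue by induction on \(k\), starting from the flow identity~\eqref{eq:app_dcl_flow}. Iterating that identity gives \(\widehat F^k(z)=LF_\Delta^k(L^{-1}(z-\beta))+\beta\). The only point to check is that the interior terms \(L^{-1}(\cdot-\beta)\) and \(L(\cdot)+\beta\) cancel between consecutive steps, and this holds as long as the iterates stay on the support. Applying the chart and using the computation above yields
\[
A_\star\widehat F^k(z)+b_\star=MF_\Delta^k(s)+a=M\Phi_\star^{k\Delta}(s)+a .
\]
Since \(x(t)=M\Phi_\star^t(s)+a\) solves~\eqref{eq:app_ph_realization} with initial value \(Ms+a\), uniqueness of solutions (well-posedness) gives \(M\Phi_\star^{k\Delta}(s)+a=\Phi_{pH}^{k\Delta}(Ms+a)\). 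This equals \(\Phi_{pH}^{k\Delta}(A_\star z+b_\star)\).

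\textbf{Vector-field identity~\eqref{eq:app_vector_bridge}.} We differentiate the continuous-time realization~\eqref{eq:app_ct_realization} in \(\tau\) at \(\tau=0\). The left side gives \(f_T(Ls+\beta)=f_T(z)\) and the right side gives \(Lf_\star(s)\), so \(f_T(z)=Lf_\star(s)\). Multiplying by \(A_\star\) gives
\[
A_\star f_T(z)=Mf_\star(s)=\tfrac{d}{dt}(Ms+a).
\]
By the assumed pH realization, this derivative equals \((J_\star-R_\star)\nabla H_\star(Ms+a)\). By the first step, \(Ms+a=A_\star z+b_\star\), which proves the claim. Proposition~\ref{prop:affine_covariance} is not strictly needed here, because we transport the vector field directly. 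One could alternatively note that pulling back \((H_\star,J_\star,R_\star)\) through the chart gives a pH system in \(z\)-coordinates.

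\textbf{Main obstacle.} The delicate points are domain and regularity bookkeeping rather than algebra. First, \eqref{eq:app_ct_realization} must hold for \(\tau\) in an open neighborhood of \(0\) and \(f_T,\Phi_\star\) must be differentiable there, so that differentiating at \(\tau=0\) is legitimate. Second, the uniqueness argument equating \(M\Phi_\star^t+a\) with \(\Phi_{pH}^t\) requires \(\nabla H_\star\) to be locally Lipschitz on the relevant set. I would state both as consequences of the smooth well-posed ODE assumption and restrict every claim to the data support, exactly as the theorem does.
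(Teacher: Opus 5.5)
Your proposal is correct and follows the paper's proof essentially step for step: you compose the DCL gauge $z=Ls+\beta$ with $x=Ms+a$ to obtain the chart, conjugate $\widehat F$ by the gauge and iterate to get the sampled bridge, and differentiate the continuous-time realization at $\tau=0$. The differences are minor. You differentiate before transporting, so the vector-field identity needs only the pointwise pH realization and not the pH flow $\Phi_{pH}^\tau$. You also spell out the uniqueness argument behind $M\Phi_\star^{k\Delta}(s)+a=\Phi_{pH}^{k\Delta}(Ms+a)$, which the paper asserts without comment.
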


\begin{proof}
By Lemma~\ref{lem:app_windowed_dcl}, \(z=Ls+\beta\), hence
\(s=L^{-1}(z-\beta)\). Composing the inverse DCL coordinate map with the pH
coordinate \(x=Ms+a\) gives
\begin{equation}
    x=M L^{-1}(z-\beta)+a=A_\star z+b_\star.
\end{equation}
This proves that the student chart is the composition of the DCL gauge and the
pH coordinate map.

For one sampled step, use \eqref{eq:app_dcl_flow} and \(z=Ls+\beta\):
\begin{align}
    A_\star\widehat F(z)+b_\star
    &=ML^{-1}\big(LF_\Delta(s)+\beta\big)+a-ML^{-1}\beta \\
    &=MF_\Delta(s)+a
      =M\Phi_\star^\Delta(s)+a
      =\Phi_{pH}^\Delta(Ms+a)
      =\Phi_{pH}^\Delta(A_\star z+b_\star).
\end{align}
Iterating the same identity gives~\eqref{eq:app_sampled_bridge} for all sampled
horizons whose trajectories remain on the support.

Finally, under the continuous-time realization condition
\eqref{eq:app_ct_realization}, the same calculation with \(\Phi_T^\tau\) gives
\begin{equation}
    A_\star\Phi_T^\tau(z)+b_\star
    =
    \Phi_{pH}^\tau(A_\star z+b_\star)
\end{equation}
for \(\tau\) near zero. Differentiating at \(\tau=0\) yields
\begin{equation}
    A_\star f_T(z)
    =f_{pH}(A_\star z+b_\star)
    =(J_\star-R_\star)\nabla H_\star(A_\star z+b_\star),
\end{equation}
which proves~\eqref{eq:app_vector_bridge}.
\end{proof}

\paragraph{Interpretation.}
The theorem has three ingredients: DCL supplies the teacher gauge
\(z=Ls+\beta\), the pH model supplies a structured coordinate \(x=Ms+a\), and
the student chart is their composition \(x=ML^{-1}(z-\beta)+a\). The stochastic
Gaussian perturbation is used only to invoke DCL on a non-degenerate
small-noise system; the pH bridge acts on the deterministic mean flow
\(F_\Delta=\Phi_\star^\Delta\).

\newpage
\newpage

\section{Experimental Details}
\label{app:exp_details}

The codebase for the numerical evaluation are in \url{https://github.com/Beckers-Lab/CIPHER_source.git}
\subsection{Systems and datasets}
\label{app:systems_datasets}

We evaluate on a conservative pendulum and a damped Duffing oscillator.

\paragraph{Conservative pendulum.}
The undamped simple pendulum with unit mass and unit length has Hamiltonian
\[
\tilde H(q,p)=\frac{1}{2}p^2+g(1-\cos q),
\qquad
g=9.81,
\]
and evolves as
\[
\begin{bmatrix}
\dot q\\
\dot p
\end{bmatrix}
=
\begin{bmatrix}
0 & 1\\
-1 & 0
\end{bmatrix}
\nabla \tilde H(q,p).
\]
This is a conservative pH system with \(R=0\).

\paragraph{Damped Duffing oscillator.}
The Duffing oscillator has Hamiltonian
\[
H(q,p)=\frac{1}{2}p^2-\frac{1}{2}q^2+\frac{1}{4}q^4,
\]
and dynamics
\[
\dot q=p,
\qquad
\dot p=q-q^3-\delta p,
\qquad
\delta=0.3.
\]
This is a dissipative system with a double-well potential.

Each system is evaluated in two modalities. In the numeric setting, models observe a scalar partial observation sequence. In the video setting, models observe \(84\times84\) RGB frames, which are first mapped through a frozen visual encoder \(C_\omega\), see Figure \ref{fig:video_frames}. The pendulum numeric and video datasets each contain \(500\) trajectories of length \(150\). The Duffing numeric dataset contains \(500\) trajectories of length \(100\), and the Duffing video dataset contains \(500\) trajectories of length \(150\). All trajectories use sampling interval \(\Delta t=0.05\), with an \(80/10/10\) train/validation/test split. Initial conditions are sampled uniformly over the ground-truth phase-space region used for each task. No model is given the true Hamiltonian, physical parameters, or pH structure. Dataset statistics are shown in Table~\ref{tab:datasets}.

\begin{table}[ht]
\centering
\caption{Dataset statistics.}
\label{tab:datasets}
\small
\begin{tabular}{llcccc}
\toprule
System & Modality & Trajectories & Steps & \(\Delta t\) & Train/Val/Test \\
\midrule
Pendulum & Numeric & 500 & 150 & 0.05 & 400/50/50 \\
Pendulum & Video & 500 & 150 & 0.05 & 400/50/50 \\
Duffing & Numeric & 500 & 100 & 0.05 & 400/50/50 \\
Duffing & Video & 500 & 150 & 0.05 & 400/50/50 \\
\bottomrule
\end{tabular}
\end{table}

\begin{figure}[h]
    \centering
    \includegraphics[width=0.7\linewidth]{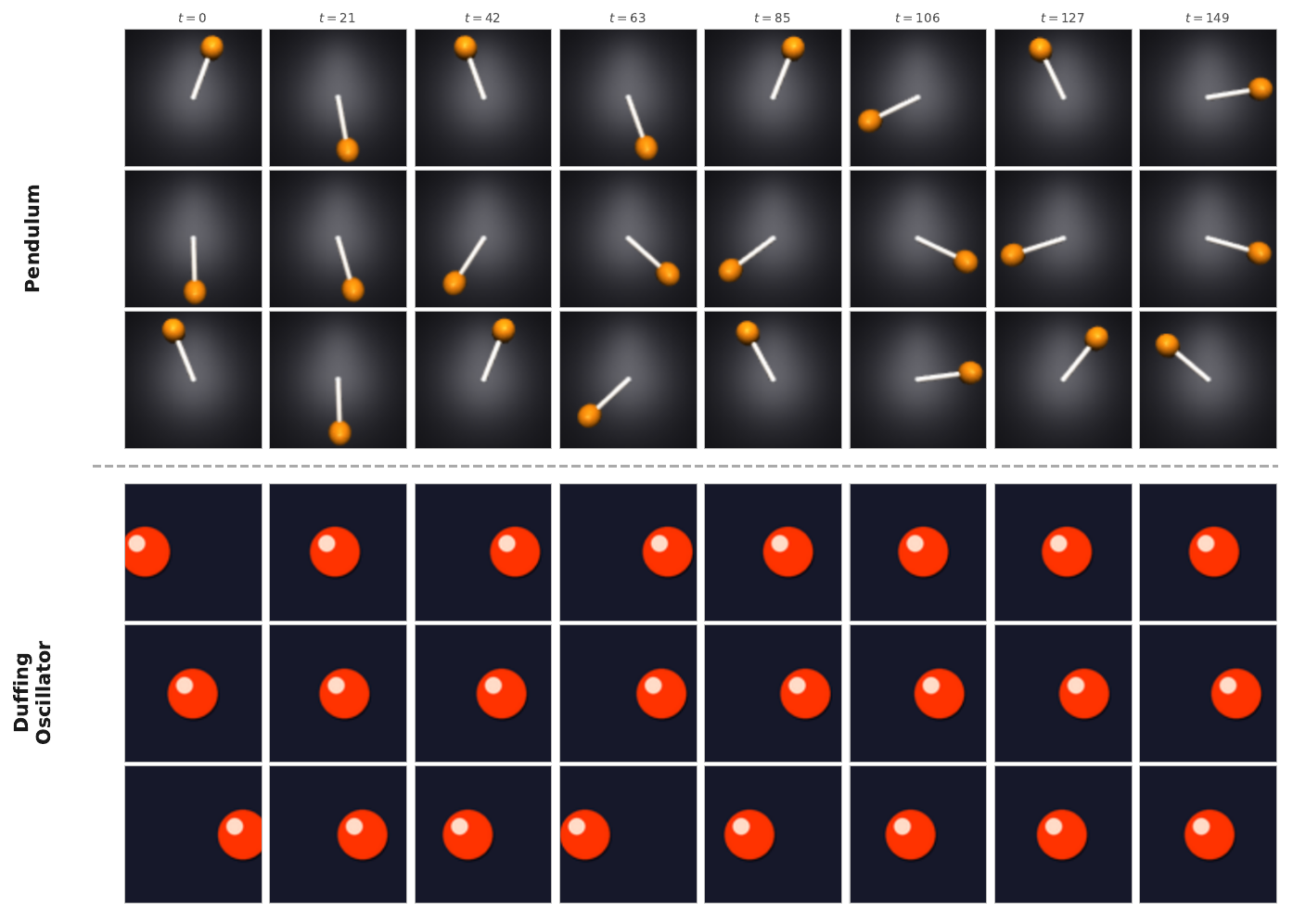}
    \caption{This figure showcases the video frames we used for video modality. The ground truth dynamics is simulated and rendered through Mujoco. Top section contains frames for pendulum, and bottom section for duffing oscillator.}
    \label{fig:video_frames}
\end{figure}

\subsection{Compared methods}
\label{app:compared_methods}

All methods use the same latent dimension \(d_z=2\). Numeric observations are encoded directly. Video observations are passed through the same frozen visual encoder before the latent dynamics model.

\paragraph{CIPHER: Teacher \(\rightarrow\) pHNN student.}
Our full method first trains a contrastive Neural ODE teacher to identify a latent flow from partial observations. A pHNN student then maps the teacher state through a learned affine chart and matches the transported teacher tangent with a pH vector field.

\paragraph{DCL.}
We include a DCL-style contrastive latent identification model with switching-linear latent dynamics \citep{laiz2025dyncl}. This is the closest contrastive-identification baseline without the pH projection stage.

\paragraph{DCL*.}
Where applicable, we also report an oracle DCL variant trained with full-state observations. This is not given to our method or other partial-observation models, and should be interpreted as an upper-reference model rather than a fair partial-observation baseline.

\paragraph{Mamba-3.}
We include Mamba-3 as a high-capacity black-box sequence model \citep{gu2024mambalineartimesequencemodeling}. This tests whether strong sequence prediction alone is sufficient without an explicit latent ODE or pH structure.

\paragraph{\textsc{CL+NODE} teacher.}
This is the teacher component of our method: a windowed contrastive encoder with an unconstrained Neural ODE latent flow. It measures pure identification performance before pH projection.

\paragraph{Direct \textsc{CL+pHNN}.}
This single-stage counterfactual uses the same observation encoder, latent dimension, pHNN capacity, and contrastive objective as our method, but imposes pH structure during contrastive identification rather than after teacher identification. It evolves
\[
\dot x=(J-R)\nabla H_\theta(x)
\]
directly under the contrastive objective.

\paragraph{GRU+pHNN negative control.}
We also evaluate a direct GRU+pHNN model trained with supervised prediction loss rather than contrastive learning. This model consistently collapses in our setting and is reported only as a negative control in the ablation results.

\subsection{Implementation Details}
\label{app:impl}

\paragraph{Computing infrastructure.}
Experiments were run on a single machine with two NVIDIA RTX~4090 GPUs
(24GB each). Each training run used one GPU. The implementation uses
PyTorch~2.7.0, CUDA~12.8, Python~3.10.12, and Ubuntu~22.04.

\subsubsection{Model Architecture}
\label{app:arch}

All latent dynamics models use dimension \(d_z=2\). Unless otherwise stated,
the teacher Neural ODE and pHNN Hamiltonian network are two-layer MLPs with
hidden size \(64\) and Tanh activations. ODE integration uses RK4 with step
\(\Delta t=0.05\).

\begin{table}[h]
\centering
\caption{Shared architecture settings.}
\label{tab:arch}
\small
\begin{tabular}{lc}
\toprule
Component & Setting \\
\midrule
GRU encoder & 2-layer bidirectional GRU \\
GRU hidden size & 128 \\
Window size & 10 \\
Latent dimension & 2 \\
GRU dropout & 0.1 \\
Teacher \(f_T\) & MLP, hidden sizes \(64,64\), Tanh \\
Hamiltonian \(H_\theta\) & MLP, hidden sizes \(64,64\), Tanh \\
pH interconnection \(J\) & fixed canonical \(2\times2\) matrix \\
Dissipation \(R\) & constant learnable PSD matrix \(R=L^\top L\) \\
Affine chart & \(x=Az+b,\; A=\exp(B)\) \\
\bottomrule
\end{tabular}
\end{table}

For video experiments, frames are encoded by a spatial-softmax CNN visual
encoder with \(K=16\) keypoints, producing a \(32\)-dimensional code. The CNN is
pretrained for 50 epochs using pixel reconstruction and keypoint-spread
regularization, then fine-tuned during dynamics training with a reduced learning
rate.

\subsubsection{Training}
\label{app:training}

All models are trained with AdamW, learning rate \(2\times10^{-4}\), weight
decay \(10^{-5}\), gradient clipping at \(5.0\), and a cosine learning-rate
schedule with 1000 warmup steps. Contrastive learning uses prediction horizons
\(\{1,3,5,10\}\), local negatives from the batch, and global negatives from a
FIFO memory bank. A temporal exclusion window removes nearby states from the
negative set to avoid false negatives.

\begin{table}[h]
\centering
\caption{Task-specific training settings for our teacher--student method.}
\label{tab:train_ours}
\small
\begin{tabular}{lcccc}
\toprule
 & Pendulum Num & Pendulum Vid & Duffing Num & Duffing Vid \\
\midrule
Training steps & 50k & 50k & 50k & 50k \\
Contrastive temperature & 0.05 & 0.05 & 0.07 & 0.07 \\
Temporal exclusion & 10 & 10 & 15 & 15 \\
Local negatives & 2048 & 2048 & 2048 & 2048 \\
Direction weight \(\lambda_{\rm dir}\) & 1.0 & 1.0 & 1.0 & 1.0 \\
Magnitude weight, initial & 0.1 & 0.1 & 0.1 & 0.1 \\
Magnitude weight, final & 1.0 & 1.0 & 1.0 & 1.0 \\
Magnitude warmup & 10k--20k & 10k--20k & 10k--20k & 10k--20k \\
Chart regularization \(\lambda_A\) & \(10^{-4}\) & \(10^{-4}\) & \(10^{-4}\) & \(10^{-4}\) \\
CNN lr ratio & -- & 0.3 & -- & 0.3 \\
\bottomrule
\end{tabular}
\end{table}

The direct \textsc{CL+pHNN} baseline uses the same encoder, latent dimension,
pHNN capacity, optimizer, contrastive horizons, and negative-sampling procedure,
but replaces the teacher Neural ODE with the pHNN vector field during
contrastive training. Competing baselines are matched by similar parameter size and trained for the same iterations on the same datasets.

\subsection{Detailed Results}

The detailed, comprehensive representation comparisons between all baselines are discussed here.
 \paragraph{Qualitative comparison across systems and modalities.}
Figures~\ref{fig:pendulum_all} and~\ref{fig:duffing_all} reveal a clear difference between the direct and decomposed routes for combining contrastive learning with port-Hamiltonian structure.
Across both systems, the teacher consistently learns a predictive latent flow with the correct topology, while the student preserves that topology after projection into the pH-constrained family.

\paragraph{Pendulum.}
For the pendulum, all three learned models recover the single-center oscillatory phase portrait.
The teacher (\textsc{CL+NODE}) learns a smooth nested family of closed curves whose geometry closely matches the ground-truth portrait, with only a mild tilt or anisotropy relative to the true coordinates.
The student preserves this qualitative structure after projection: the center remains correctly located, the surrounding level sets remain nested and smooth, and the trajectory rollouts remain coherent over long horizons.
This shows that, in the pendulum case, the projection from the teacher coordinates into the pH chart acts as a structured realization of an already meaningful latent flow.

The direct \textsc{CL+pHNN} route also performs well on the pendulum, especially in the video case, where the learned field is visually sharp and the rollout trajectories remain stable.
This indicates that for the simpler single-well system, direct structured contrastive learning is already viable.
However, the key observation is that the teacher--student route remains comparably strong while additionally yielding an explicit realization step from teacher space to pH space.
In other words, on the pendulum the decomposed route is not dramatically more accurate than the direct route, but it is at least as stable and offers a cleaner structured interpretation of the latent dynamics.

\paragraph{Duffing.}
The Duffing system provides the more discriminative test because the phase portrait contains two wells, a saddle region, and a nontrivial transport geometry between the basins.
Here the teacher again learns the correct global topology: the two attracting regions are correctly placed, the basin geometry is preserved, and the rollout trajectories form the expected double-well structure.
The student remains close to the teacher and successfully realizes the same two-well flow after projection, showing that the learned affine chart and pH dynamics can preserve a substantially richer nonlinear geometry than in the pendulum case.

The most important qualitative result appears in the Duffing numeric setting.
There, direct \textsc{CL+pHNN} fails to recover the global phase portrait: the learned field fragments into disconnected local pieces, and the rollout trajectories no longer organize into a coherent double-well geometry.
In contrast, both the teacher and the student retain the correct basin structure.
This strongly suggests that the direct route is more brittle when latent-state identification and structured realization must be solved simultaneously from weak observations.
The decomposed route avoids this failure by first identifying a predictive latent flow and only then enforcing pH structure.

In the Duffing video setting, direct \textsc{CL+pHNN} becomes more competitive and recovers a coherent double-well portrait. However, it identifies a conservative dynamics, which is not behaved in dataset.
The student remains visually comparable to the teacher and preserves the correct topology after projection.
Taken together, the Duffing results support the view that direct structured contrastive learning can work when the representation problem is easier, but the teacher--student decomposition is the more reliable route across regimes.

\paragraph{Main empirical message.}
The figures support three conclusions.
First, the contrastive teacher consistently learns a useful latent flow in all four settings.
Second, the student is able to project that latent flow into a pH-constrained chart while preserving the essential phase-space topology, including the nontrivial double-well geometry of Duffing.
Third, direct \textsc{CL+pHNN} is a meaningful and sometimes strong alternative, but it is less robust: it can match the decomposed method in favorable cases, yet it can also fail qualitatively on harder settings where the teacher--student route remains stable.
This is precisely the advantage of the decomposition: it separates latent identification from structured realization, thereby reducing the optimization burden of learning a physics-informed latent model from partial or high-dimensional observations.

\begin{figure}
    \centering
    \includegraphics[width=1\linewidth]{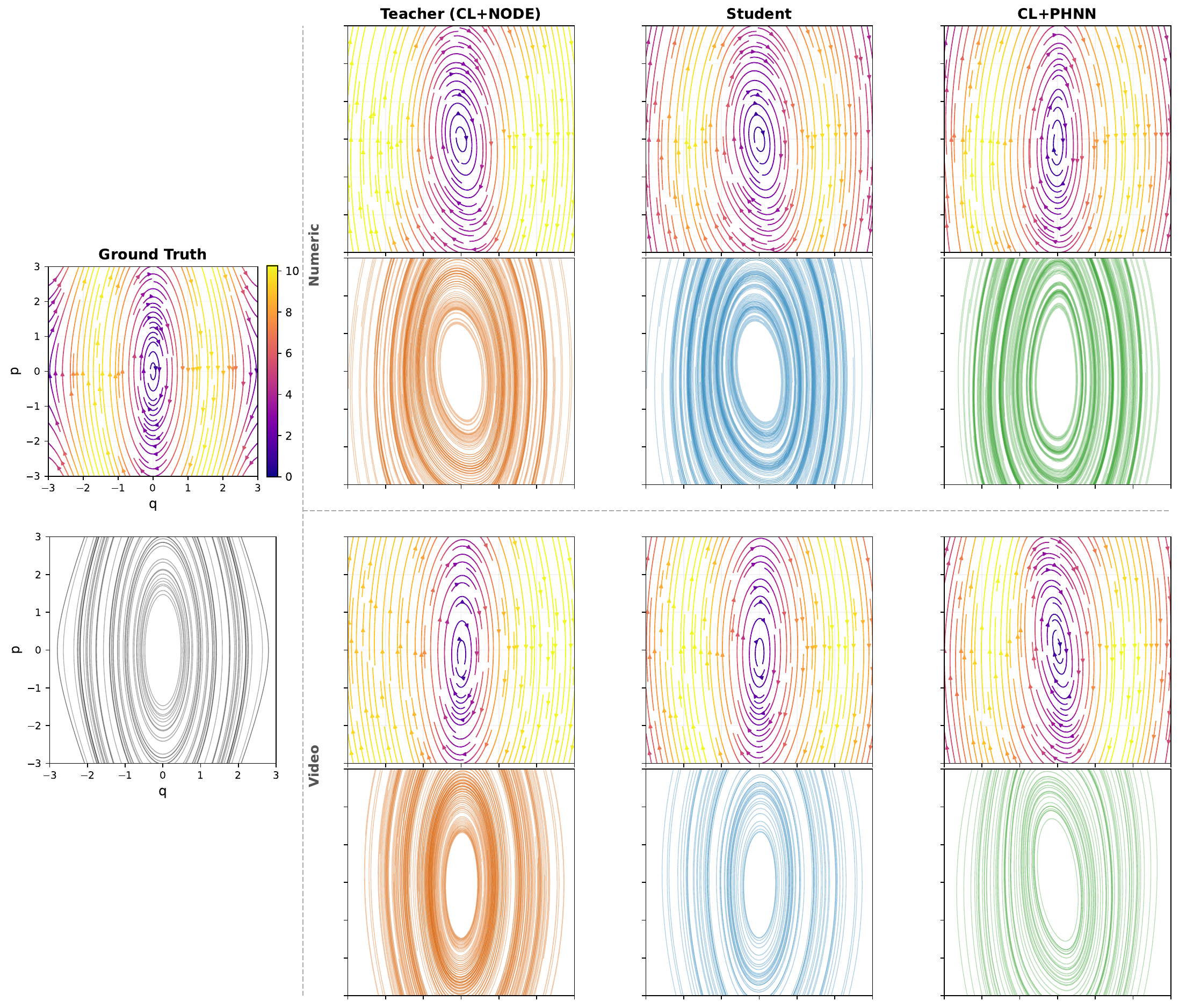}
    \caption{\textbf{Visualization for Our framework variants on Pendulum dynamics identification (Q1)}
    This figure contains the state representation learned from Numeric (top section), and from video (bottom section). Ground truth phase diagram is shown on the left. Both Student and CL+pHNN models are physics-consistent, while teacher model is not.}
    \label{fig:pendulum_all}
\end{figure}

\begin{figure}
    \centering
    \includegraphics[width=1\linewidth]{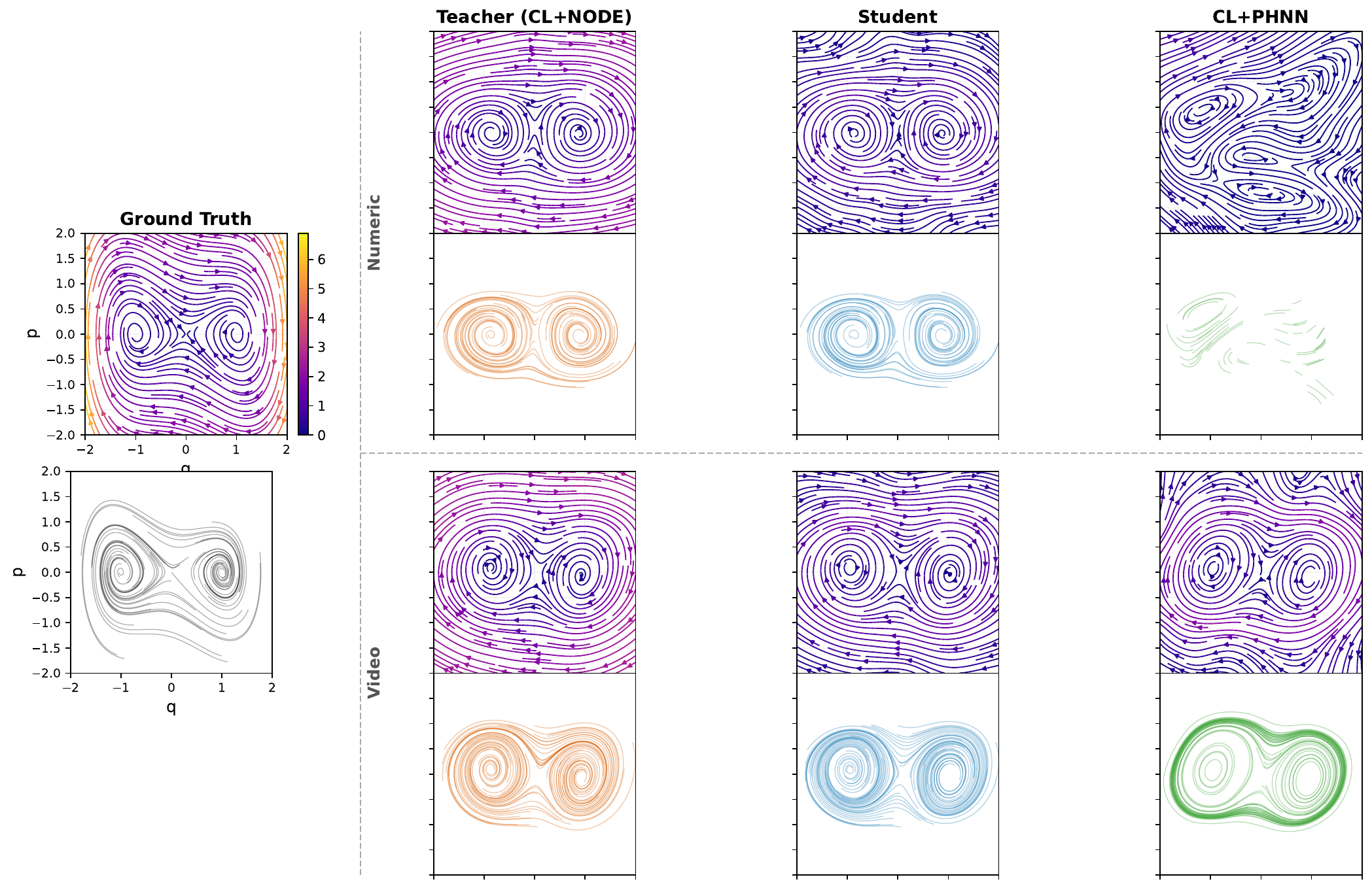}
    \caption{\textbf{Visualization for Our framework variants on Duffing dynamics identification (Q1)}
    This figure contains the state representation learned from Numeric (top section), and from video (bottom section). Ground truth phase diagram is shown on the left. Both Student and CL+pHNN models are physics-consistent, while teacher model is not.}
    \label{fig:duffing_all}
\end{figure}

\begin{figure}
    \centering
    \includegraphics[width=1\linewidth]{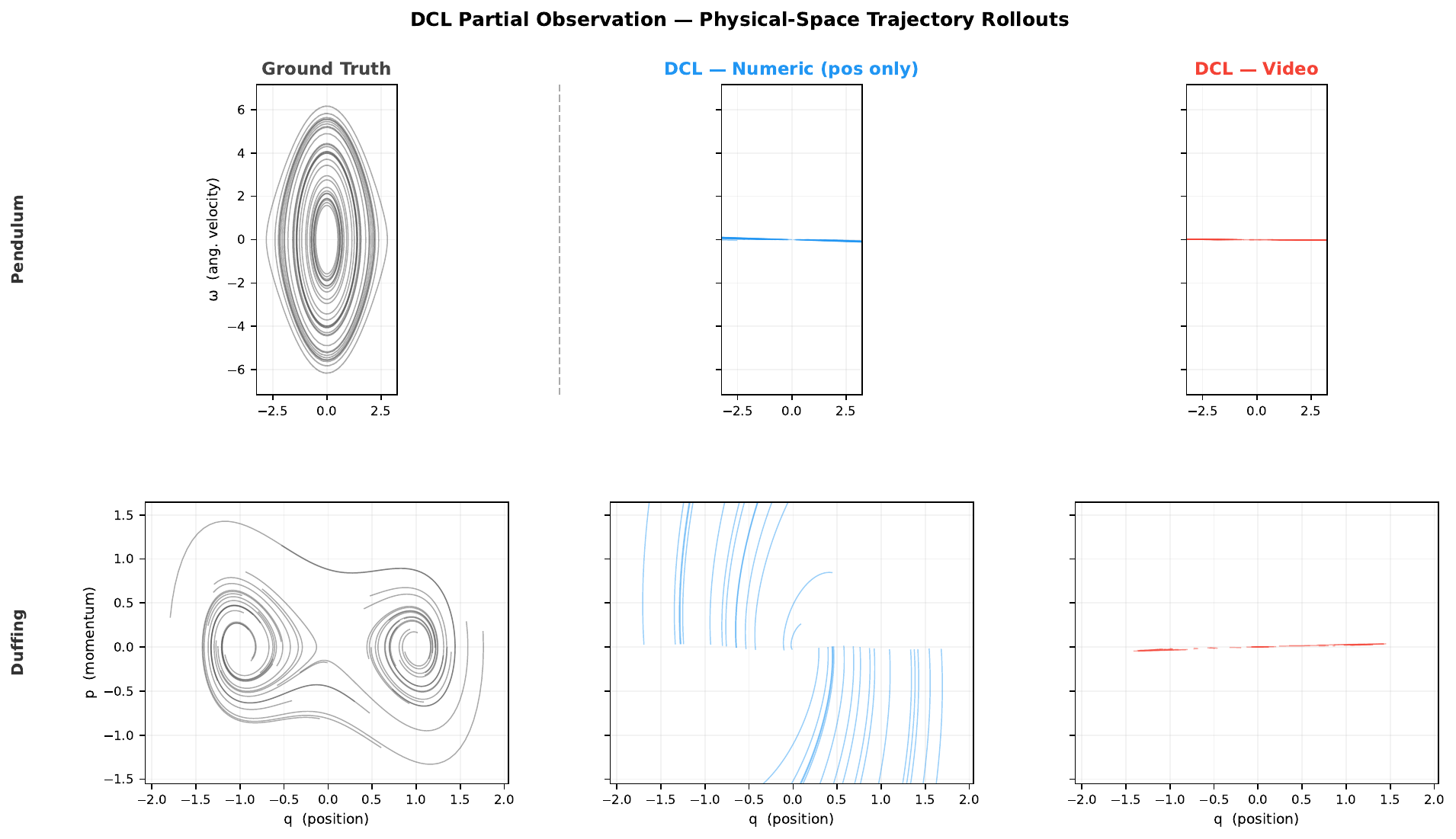}
    \caption{\textbf{Visualization for DCL on Pendulum and Duffing dynamics identification (Q1)}
    This figure contains the state representation learned from Numeric (mid column), and from video (right column). Ground truth phase diagram is shown on the left. Under partial observation, DCL does not learn useful representations of the dynamics. This is expected because partial observations (non-injective) are violation of DCL's theory assumption.}
    \label{fig:DCL_partial_all}
\end{figure}

\begin{figure}
    \centering
    \includegraphics[width=0.9\linewidth]{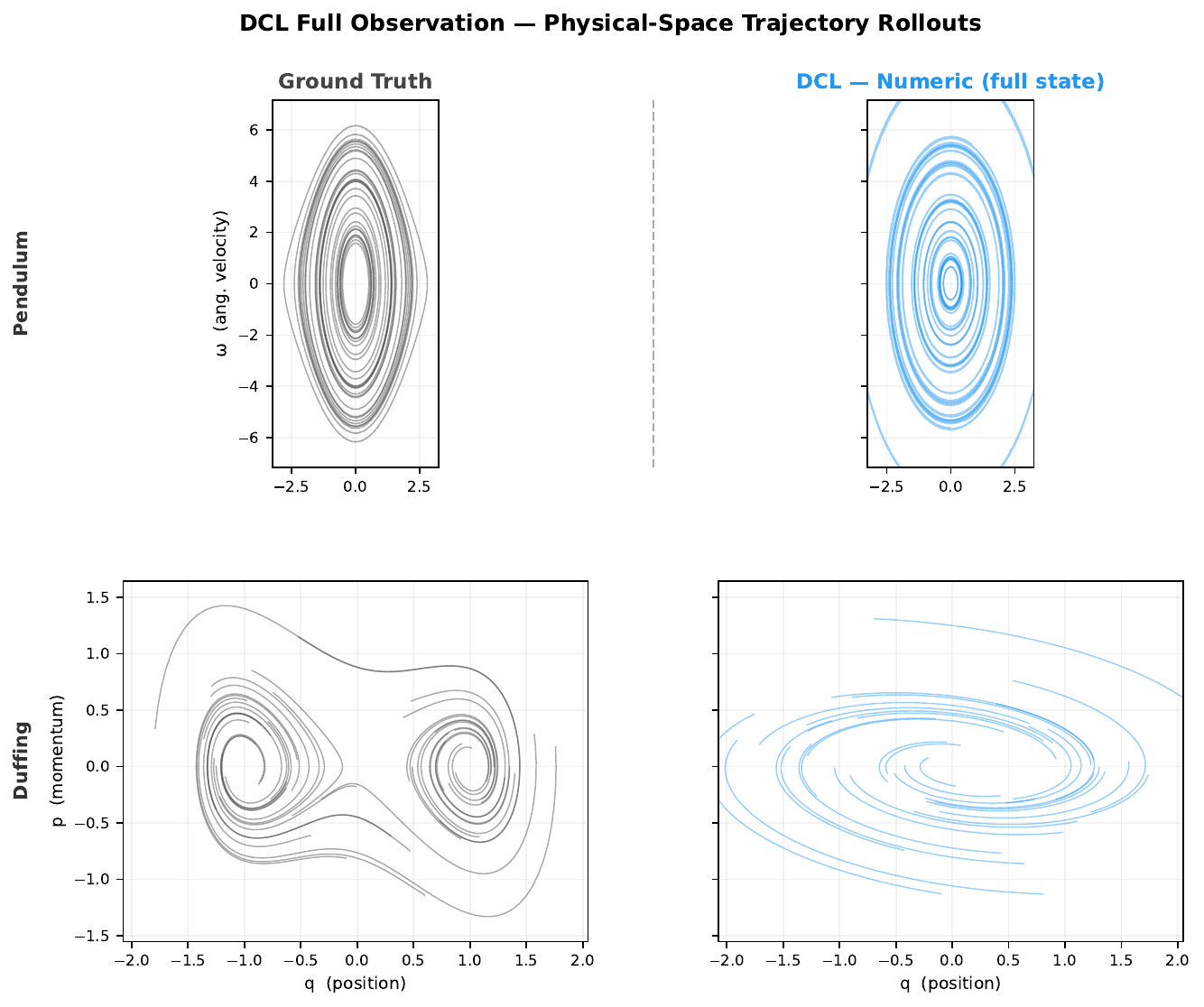}
    \caption{\textbf{Visualization for DCL* on Pendulum and Duffing dynamics identification (Q1)}
    DCL* is the DCL with switching linear dynamics trained on fully observable data, and only applicable case is numeric dataset for pendulum and duffing. This figure contains the state representation learned from Pendulum Numeric (first row, second column), and from Duffing numeric (second row, second column). Ground truth phase diagram is shown on the left. }
    \label{fig:DCL_full_all}
\end{figure}

\begin{figure}
    \centering
    \includegraphics[width=0.9\linewidth]{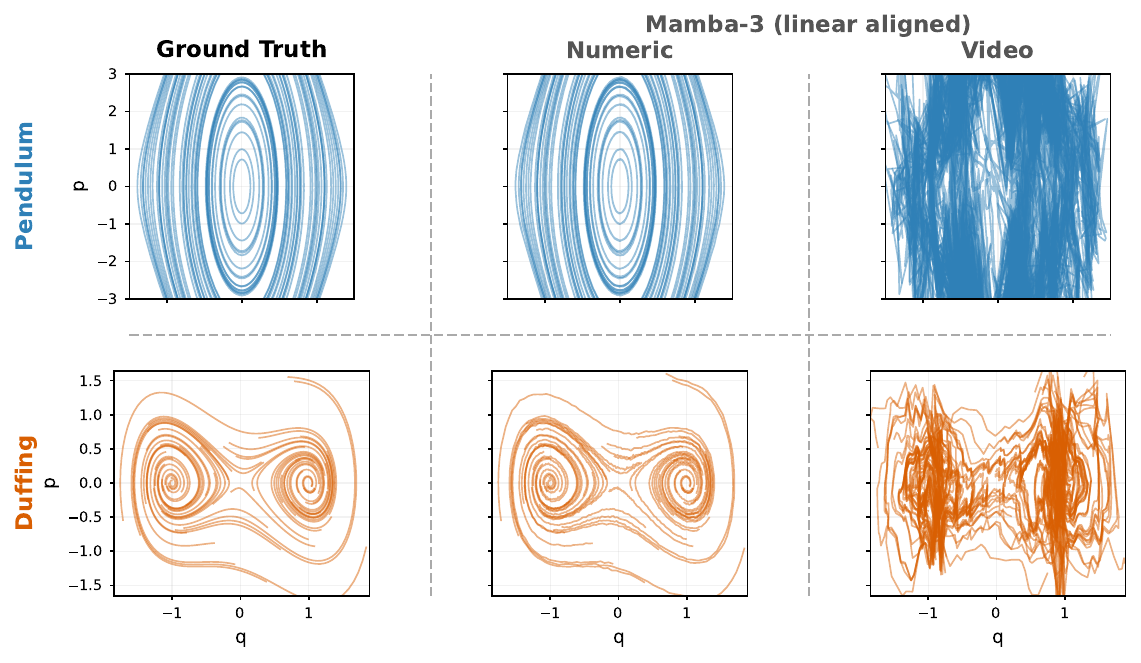}
    \caption{\textbf{Visualization for Mamba-3 on Pendulum and Duffing dynamics identification (Q1)}
    This figure contains the state representation learned from Numeric (mid column), and from video (right column). Ground truth phase diagram is shown on the left. Mamba-3 performs extreme well for numeric datasets because reconstruction of position state is used in its loss objective. However, when state information is not explicitly contained in video frames, Mamba-3 struggles to produce useful representation.}
    \label{fig:Mamba_all}
\end{figure}

\end{document}